\title{Deep Sketched Output Kernel Regression\\for Structured Prediction}
\author{%
  Tamim El Ahmad$^*$ \\
  LTCI, Télécom Paris\\
  IP Paris\\
  France \\
  \texttt{tamim.elahmad@telecom-paris.fr} \\
  \And
  Junjie Yang$^*$ \\
  LTCI, Télécom Paris\\
  IP Paris\\
  France \\
  \texttt{junjie.yang@telecom-paris.fr} \\
  \AND
  Pierre Laforgue \\
  Department of Computer Science \\
  University of Milan \\
  Italy \\
  \texttt{pierre.laforgue@unimi.it} \\
  \And
  Florence d'Alch\'e-Buc \\
  LTCI, Télécom Paris\\
  IP Paris\\
  France \\
  \texttt{florence.dalche@telecom-paris.fr} \\
}
\begin{document}

\maketitle

\begin{abstract}
By leveraging the kernel trick in the output space, kernel-induced losses provide a principled way to define structured output prediction tasks for a wide variety of output modalities.
In particular, they have been successfully used in the context of surrogate non-parametric regression, where the kernel trick is typically exploited in the input space as well.
However, when inputs are images or texts, more expressive models such as deep neural networks seem more suited than non-parametric methods.
In this work, we tackle the question of how to train neural networks to solve structured output prediction tasks, while still benefiting from the versatility and relevance of kernel-induced losses.
We design a novel family of deep neural architectures, whose last layer predicts in a data-dependent finite-dimensional subspace of the infinite-dimensional output feature space deriving from the kernel-induced loss.
This subspace is chosen as the span of the eigenfunctions of a randomly-approximated version of the empirical kernel covariance operator.
Interestingly, this approach unlocks the use of gradient descent algorithms (and consequently of any neural architecture) for structured prediction.
Experiments on synthetic tasks as well as real-world supervised graph prediction problems show the relevance of our method. \footnotetext[1]{Equal contribution.}
\end{abstract}

\section{Introduction}
\label{sec:intro}

%





Learning to predict complex outputs, such as graphs or any other composite object, raises many challenges in machine learning \citep{bakir2007, nowozin2011, deshwal2019struct}.
The most important of them is undoubtedly the difficulty of leveraging the geometry of the output space.
%
In supervised graph prediction, for instance, it is often required to use node permutation-invariant and node size-insensitive distances, such as the Fused Gromov-Wasserstein distance \citep{vayer_optimal_2019}.
In that regard, surrogate methods such as Output Kernel Regression \citep{westonNIPS2002, geurts2006, Kadri_icml2013} offer a powerful and flexible framework by using the kernel trick in the output space.
By appropriately choosing the output kernel, it is possible to incorporate various kinds of information, both in the model and in the loss function \citep{nowak19a, ciliberto2020general, cabannes2021fast}.
One important limitation of this approach, however, is that the induced output features may be infinite-dimensional.

If leveraging the kernel trick in the input space may be a solution \citep{cortes2005, brouard2016input}, such non-parametric methods are usually outperformed by more expressive models such as neural networks when input data consist of images or texts.
In the context of structured prediction, deep learning has led to impressive results for specific tasks, such as semantic segmentation \citep{kirillov2023segment} or the protein 3D structure prediction \citep{jumper2021highly}.
To create versatile deep models, the main approach explored in the literature is the energy-based approach, which consists of converting structured prediction into learning a scalar score function \citep{LeCun2006ATO, belanger2016structured, Gygli2017, lee2022structured}.
However, these methods usually fail to go beyond structured prediction problems which can be reformulated as high-dimensional multi-label classification problems, as pointed out by \citet{graber2018deep}.
%
Besides, this approach requires a two-step strategy, since the energy function is first learned thanks to the training data, and then maximized at inference time.
To obtain an end-to-end model, \citet{belanger2017endtoend} uses direct risk minimization techniques, and \citet{tu2018learning} introduces inference networks, a neural architecture that approximates the inference problem.
In this work, we choose to benefit from the versatility of kernel-induced losses, and deploy it to neural networks.
To this end, we address the infinite-dimensionality of the output features by computing a finite-dimensional basis within the output feature space, defined as the eigenbasis of a sketched version of the output empirical covariance operator.

Sketching \citep{mahoney2011randomized, woodruff14} is a dimension-reduction technique based on random linear projections.
%
In the context of kernel methods, it has mainly been explored through the so-called Nystr\"om approximation \citep{WilliamsNystromNIPS2000, rudi2015less}, or via specific distributions such as Gaussian or Randomized Orthogonal Systems \citep{Yang2017, lacotte2022adaptive}.
Previous works tackle sketched scalar kernel regression by providing a low-rank approximation of the Gram matrix \citep{drineas2005, bach2013sharp}, reducing the number of parameters to learn at the optimization stage \citep{Yang2017, lacotte2022adaptive}, providing data-dependent random features \citep{WilliamsNystromNIPS2000, Yang_NIPS2012_621bf66d, Kpotufe2020}, or leveraging an orthogonal projection operator in the feature space \citep{rudi2015less}.
This last interpretation has been used to learn large-scale dynamical systems \citep{meanti2023estimating}, and structured prediction \citep{elahmad2024sketch}.

In our proposition to solve structured prediction from complex input data, we make the following contributions:
\begin{itemize}[itemsep=5pt]
    \item[$\bullet$] We introduce Deep Sketched Output Kernel Regression, a novel family of deep neural architectures whose last layer predicts a data-dependent finite-dimensional representation of the outputs, that lies in the infinite-dimensional feature space deriving from the kernel-induced loss.
    \item[$\bullet$] This last layer is computed beforehand, and is the eigenbasis of the sketched empirical covariance operator, unlocking the use of gradient-based techniques to learn the weights of the previous layers for any neural architecture.
    \item[$\bullet$] We empirically show the relevance of our approach on a synthetic least squares regression problem, and provide a strategy to select the sketching size.
    \item[$\bullet$] We show that DSOKR performs well on two text-to-molecule datasets.
\end{itemize}

%
%
%
%
%
%
%


\section{Deep Sketched Output Kernel Regression}
\label{sec:DSOKR}

In this section, we set up the problem of structured prediction.
Specifically, we consider surrogate regression approaches for kernel-induced losses.
By introducing a last layer able to make predictions in a Reproducing Kernel Hilbert Space (RKHS), we unlock the use of deep neural networks as hypothesis space.

Consider the general regression task from an input domain $\bmX$ to a structured output domain $\bmY$ (e.g., the set of labeled graphs of arbitrary size).
%
%
%
Learning a mapping from $\bmX$ to $\bmY$ naturally requires taking into account the structure of the output space.
%
One way to do so is the {\it Output Kernel Regression} (OKR) framework \citep{westonNIPS2002, cortes2005, geurts2006, Brouard_icml11, brouard2016input},
which is part of the family of surrogate regression methods \citep{Ciliberto2016,ciliberto2020general}.

\paragraph{\bf Output Kernel Regression.}
A positive definite (p.d.) kernel $\kernely: \bmY \times \bmY \to \reals$ is a symmetric function such that for all $n \geq 1$, and any $\left(y_i\right)_{i=1}^n \in \bmY^n$, $\left(\alpha_i\right)_{i=1}^n \in \reals^n$, we have $\sum_{i,j=1}^n \alpha_i\, \kernely\left(y_i, y_j\right)\alpha_j \geq 0$.
Such a kernel is associated with a canonical feature map $\psiy \colon\! y \in \bmY \mapsto \kernely(\cdot, y)$, which is uniquely associated with a Hilbert space of functions $\bmH \subset \reals^{\bmY}$, the RKHS, such that $\psiy(y) \in \bmH$ for all $y \in \bmY$, and $h\left(y\right) = \langle h, \psiy(y)\rangle_{\bmH}$ for any $\left(h, y\right) \in \bmH \times \bmY$.
Given a p.d. kernel $\kernely$, $\psiy$ its canonical feature map and $\Hy$ its RKHS,
the OKR approach that we consider in this work exploits the kernel-induced squared loss:
\looseness-1
\begin{equation}\label{eq:loss}
\Delta(y, y^\prime) \coloneqq \| \psiy(y) - \psiy(y^\prime) \|_{\Hy}^2 = \kernely(y, y) - 2 \kernely(y, y^\prime) + \kernely(y^\prime, y^\prime)\,.
\end{equation}
%

The versatility of loss \eqref{eq:loss} stems from the large variety of kernels that have been designed to compare structured objects \citep{Gartner08, Korba18, borgwardt20}.
In multi-label classification, for instance, choosing the linear kernel or the Tanimoto kernel induces respectively the Hamming and the F1-loss \citep{tanimoto1958elementary}.
In label ranking, Kemeny and Hamming embeddings define Kendall’s $\tau$ distance and the Hamming loss \citep{Korba18, nowak20a} respectively.
%
%
%
For sequence prediction tasks, n-gram kernels have been proven useful \citep{cortes2007, Kadri_icml2013, nowak20a}, while an abundant collection of kernels has been designed for graphs, 
based either on bags of structures or information propagation, see \cref{apx:graph_pred} and \citet{borgwardt20} for examples.
%

If kernel-induced losses can be computed easily thanks to the kernel trick, note that most of them are however non-differentiable.
%
%
%
%
%
%
In particular, this largely compromises their use within deep neural architectures, that are however key to achieve state-of-the-art performances in many applications.
In this work, we close this gap and propose an approach that benefits from both the expressivity of neural networks for input image/textual data, as well as the relevance of kernel-induced losses for structured outputs.
Formally, let $\rho$ be a joint probability distribution on $\bmX \times \bmY$.
Our goal is to design a family $(f_\theta)_{\theta \in \Theta} \subset \bmY^\bmX$ of neural networks with outputs in $\bmY$  that can minimize the kernel-induced loss, i.e., that can solve
\looseness-1
\begin{equation}\label{eq:pb-init}
    \underset{\theta \in \Theta}{\min} ~ \mathbbm{E}_{(x, y) \sim \rho}\Big[\left\|\psiy(y) - \psiy\big(f_\theta(x)\big)\right\|_{\Hy}^2\Big]\,.
\end{equation}
\begin{figure}[t]
\centering
\includegraphics[width=0.6\textwidth]{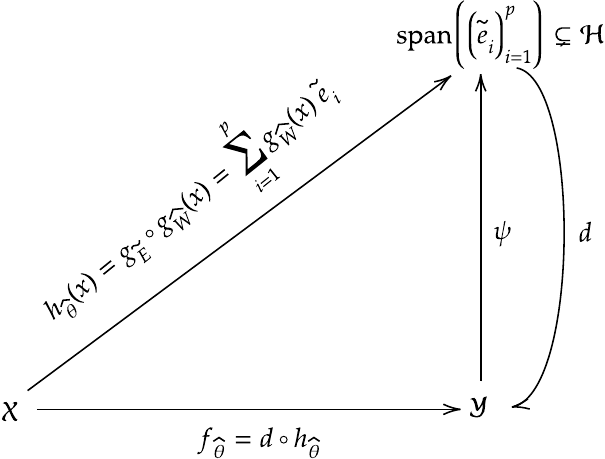}
\caption{Illustration of DSOKR model.} \label{fig:DSOKR}
\end{figure}
To do so, we assume that we can access a training sample $\{(x_1,y_1), \ldots,(x_n,y_n)\}$ drawn i.i.d. from $\rho$.
%
Since learning $f_\theta$ through $\psiy$ is difficult, we employ a two-step method. 
First, we solve the surrogate empirical problem
\begin{equation}\label{eq:init_loss}
    \hat{\theta} \in \argmin_{\theta \in \Theta} ~ L(\theta) = \argmin_{\theta \in \Theta}~\frac{1}{n} \sum_{i=1}^n \|h_\theta(x) - \psiy(y)\|_{\Hy}^2\,,
\end{equation}
where $(h_\theta)_{\theta \in \Theta} \subset \bmH^\bmX$ is a family of neural networks with outputs in $\bmH$. 
We then retrieve the solution by solving for any prediction the pre-image problem 
\begin{equation}
    f_{\hat{\theta}}(x) = \argmin_{y \in \bmY} ~ \|h_{\hat{\theta}}(x) - \psiy(y)\|_{\Hy}^2\,.
\end{equation}
This approach nonetheless raises a major challenge.
Indeed, the dimension of the canonical feature space $\Hy$ may be infinite, making the training very difficult.
The question we have to answer now is: {\it how can we design a neural architecture that is able to learn infinite-dimensional output kernel features?}

\paragraph{\bf Neural networks with infinite-dimensional outputs.}
We propose a novel architecture of neural networks to compute the function $h_{\theta}$ with values in~$\Hy$.
Let $p \geq 1$, our architecture is the composition of two networks: an input neural network, denoted $g_{W}\colon \bmX \to \reals^p$, with generic weights $W \in \bmW$, and a last layer composed of a unique {\it functional} neuron, denoted $g_{\basis}\colon \reals^p \to \Hy$, that predicts in $\Hy$.
The latter depends on the kernel $\kernely$ used in the loss definition, and on a finite basis $\basis=((e_j)_{j=1}^p)\in \Hy^p$ of elements in $\Hy$.
We let $\theta=(W, \basis)$, and for any $x \in \bmX$, we have
\looseness-1
\begin{equation}
    h_{\theta}(x) \coloneqq g_{\basis} \circ g_{W}(x)\,,
\end{equation}
where $g_{W}$ typically implements a $L-1$ neural architecture encompassing, multilayered perceptrons, convolutional neural networks, or transformers.
Instead, $g_{\basis}$ computes a linear combination of some basis functions $\basis=(e_j)_{j=1}^p \in \Hy^p$
\begin{equation}
    g_{\basis} : z \in \reals^{\ranksketchGramY} \mapsto \sum_{j=1}^p z_j e_j \in \Hy\,.
\end{equation}
With this architecture, computations remain finite, and the input neural network outputs the coefficients of the basis expansion, generating predictions in $\Hy$.
\begin{remark}[Input Neural net's last layers]
    Since the neural network $g_{W}$ learns the coordinates of the surrogate estimator in the basis, its last layers are always mere fully connected ones, regardless of the nature of the output data at hand.
\end{remark}

\subsection{Learning neural networks with infinite-dimensional outputs}

Learning the surrogate regression model $h_\theta$ now boils down to computing $\theta = (W, \basis)$.
We propose to solve this problem in two steps.
First, we learn a suitable $\basis$ using only the output training data $(\psi(y_i))_{i=1}^n$ in an unsupervised fashion.
Then, we use standard gradient-based algorithms to learn $W$ through the frozen last layer, minimizing the loss on the whole supervised training sample $(x_i,\psi(y_i))_{i=1}^n$.

\paragraph{\bf Estimating the functional last unit $\bm{g_{\basis}}$.}
A very first idea is to choose $\basis$ as the non-orthogonal dictionary $\psi(y_j)_{j=1}^n$.
But this choice induces a very large output dimension (namely, $p=n$) for large training datasets.

An alternative consists in using Kernel Principal Component Analysis (KPCA) \citep{scholkopf1997kernel}.
Given a marginal probability distribution over $\bmY$, let $\covy = \E_y[\psiy(y) \otimes \psiy(y)]$ be the covariance operator associated with $\kernely$, and $\empcovy = (1/n) \sum_{i=1}^n \psiy(y_i) \otimes \psiy(y_i)$ its empirical counterpart.
Let $\SY$ be the sampling operator that transforms a function $f \in \Hy$ into the vector $(1/\sqrt{n}) (f(x_1), \ldots ,f(x_n))^\top$ in $\reals^n$, and denote by $\SY^{\#}$ its adjoint.
We have $\SY^{\#}\colon \alpha \in \reals^n \mapsto (1/\sqrt{n}) \sum_{i=1}^n \alpha_i \psiy(y_i) \in \Hy$, and $\empcovy = \SY^\# \SY$.
KPCA provides the eigenbasis of $\empcovy$ by computing the SVD of the output Gram matrix, for a prohibitive computational cost of $\bmO(n^3)$.
In practice, though, it is often the case that the so-called {\it capacity condition} holds \citep{ciliberto2020general, elahmad2024sketch}, i.e., that the spectrum of the empirical covariance operator enjoys a large eigendecay.
It is then possible to efficiently approximate the eigenbasis of $\empcovy$ using random projections techniques \citep{mahoney2011randomized}, also known as sketching, solving this way the computational and memory issues.

\paragraph{\bf Sketching for kernel methods.}
Sketching \citep{woodruff14} is a dimension reduction technique based on random linear projections.
Since the goal is to reduce the dependency on the number of training samples $n$ in kernel methods, such linear projections can be encoded by a randomly drawn matrix $\sketchy \in \reals^{\my \times n}$, where $\my \ll n$.
Standard examples include Nystr\"{o}m approximation \citep{meanti2020kernel}, where each row of $\sketchy$ is randomly drawn from the rows of the identity matrix $I_n$, also called sub-sampling sketches, and Gaussian sketches \citep{Yang2017}, where all entries of $R$ are i.i.d. Gaussian random variables.
As they act as a random training data sub-sampler and then largely reduce both the time and space complexities induced by kernel methods, sub-sampling sketches are the most popular sketching type applied to kernels, while Gaussian sketches are less computationally efficient but offer better statistical properties.
Hence, given a sketching matrix $\sketchy \in \reals^{\my \times n}$, one can defines $\tilde{\bmH}_{\bmY} = \spn((\sum_{j=1}^n \sketchyij \psiy(y_j))_{i=1}^{\my})$ which is a low-dimensional linear subspace of $\Hy$ of dimension at most $\my$.
One can even compute the basis $\tilde{E}$ of $\tilde{\bmH}_{\bmY}$, providing the last layer $g_{\tilde{E}}$.

\paragraph{\bf Sketching to estimate $\bm{g_E}$.}
We here show how to compute the basis $\tilde{E}$ of $\tilde{\bmH}_{\bmY}$.
Let $\my < n$, and $\sketchy \in \reals^{\my \times n}$ be a sketching matrix.
Let $\sketchGramY = \sketchy \GramY \sketchy^\top \in \reals^{\my \times \my}$ be the sketched Gram matrix, and $\big\{(\sigma_i(\sketchGramY), \tilde{\mathbf{v}}_{i}), i \in[\my]\big\}$ its eigenpairs, in descending order.
We set $\ranksketchGramY = \operatorname{rank}\big(\sketchGramY\big)$.
Note that $p \leq \my$, and that $p = \my$ for classical examples, e.g. full-rank $\GramY$ and sub-sample without replacement or Gaussian $\sketchy$.
%
%
The following proposition provides the eigenfunctions of the sketched empirical covariance operator.
\begin{proposition}\label{prop:proj_exp}\citep[Proposition~2]{elahmad2024sketch}
    The eigenfunctions of the sketched empirical covariance operator $\sketchempcovy = \SY^{\#}\!\sketchy^\top\! \sketchy \SY$ are the $\tilde{e}_j = \sqrt{\frac{n}{\sigma_j(\sketchGramY)}} \SY^{\#} \sketchy^\top \tilde{\mathbf{v}}_{j} \in \Hy$, for $j \leq \ranksketchGramY$.
\end{proposition}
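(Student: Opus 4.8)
The idea is to read $\sketchempcovy$ as $A^{\#}A$ for the finite-rank operator $A \coloneqq \sketchy\SY \colon \Hy \to \reals^{\my}$, whose adjoint is $A^{\#} = \SY^{\#}\sketchy^\top$, and then use the standard spectral correspondence between $A^{\#}A$ and $AA^{\#}$: both share the same nonzero eigenvalues, and whenever $AA^{\#}\tilde{\mathbf v} = \lambda\tilde{\mathbf v}$ with $\lambda > 0$ and $\|\tilde{\mathbf v}\| = 1$, the vector $A^{\#}\tilde{\mathbf v}/\sqrt{\lambda}$ is a unit eigenfunction of $A^{\#}A$ with the same eigenvalue. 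Everything then reduces to computing $AA^{\#}$ and tracking normalization constants.

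First, I would compute $AA^{\#} = \sketchy\,\SY\SY^{\#}\,\sketchy^\top$. Since $\SY$ carries the $1/\sqrt{n}$ normalization, a direct calculation from the definitions of $\SY$ and $\SY^{\#}$ gives $\SY\SY^{\#} = (1/n)\GramY$, where $\GramY = (\kernely(y_i,y_j))_{i,j\in[n]}$ is the output Gram matrix; hence $AA^{\#} = (1/n)\,\sketchy\GramY\sketchy^\top = (1/n)\,\sketchGramY$. Therefore the eigenpairs of $AA^{\#}$ are exactly $\{(\sigma_j(\sketchGramY)/n,\ \tilde{\mathbf v}_j) : j \in [\my]\}$ (same eigenvectors as $\sketchGramY$, eigenvalues divided by $n$), and the strictly positive ones are precisely those with $j \le \ranksketchGramY$.

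Next, for each $j \le \ranksketchGramY$ I would define $\tilde e_j \coloneqq \sqrt{n/\sigma_j(\sketchGramY)}\;A^{\#}\tilde{\mathbf v}_j = \sqrt{n/\sigma_j(\sketchGramY)}\;\SY^{\#}\sketchy^\top\tilde{\mathbf v}_j$ and check the three required facts. Eigenfunction: $\sketchempcovy\,\tilde e_j = A^{\#}A\,\tilde e_j = \sqrt{n/\sigma_j(\sketchGramY)}\,A^{\#}\big(AA^{\#}\tilde{\mathbf v}_j\big) = (\sigma_j(\sketchGramY)/n)\,\tilde e_j$. Orthonormality: $\langle A^{\#}\tilde{\mathbf v}_j, A^{\#}\tilde{\mathbf v}_k\rangle_{\Hy} = \tilde{\mathbf v}_j^\top AA^{\#}\tilde{\mathbf v}_k = (\sigma_k(\sketchGramY)/n)\,\delta_{jk}$ by orthonormality of the $\tilde{\mathbf v}_j$, so after dividing by the two normalization factors one gets $\langle\tilde e_j,\tilde e_k\rangle_{\Hy} = \delta_{jk}$. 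Completeness: $\operatorname{range}(\sketchempcovy) = \operatorname{range}(A^{\#}A) = \operatorname{range}(A^{\#}) = \spn\{\SY^{\#}\sketchy^\top\tilde{\mathbf v}_j : j \le \ranksketchGramY\}$ is $\ranksketchGramY$-dimensional, so the orthonormal family $(\tilde e_j)_{j\le\ranksketchGramY}$ is an orthonormal basis of it, while $\sketchempcovy$ vanishes on its orthogonal complement; hence this family, with eigenvalues $\sigma_j(\sketchGramY)/n$, is the full eigenbasis of $\sketchempcovy$.

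The whole thing is finite-dimensional SVD bookkeeping, so I do not expect a genuine analytic obstacle: $A$ has finite-dimensional range, so $\sketchempcovy$ behaves exactly like a matrix. The two points requiring care are (a) getting the normalization constant $\sqrt{n/\sigma_j(\sketchGramY)}$ right, which hinges entirely on the $1/\sqrt{n}$ convention in $\SY$ (equivalently the factor $1/n$ in $\SY\SY^{\#}$), and (b) handling possible rank deficiency of $\sketchGramY$ cleanly — restricting to indices $j \le \ranksketchGramY$ and verifying that the discarded directions lie in $\ker(\sketchempcovy)$, so that no eigenfunction is overlooked.
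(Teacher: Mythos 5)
Your proposal is correct and is essentially the argument behind the cited result: the paper itself defers the proof of this proposition to \citet[Proposition~2]{elahmad2024sketch}, where the same reduction is used, namely writing $\sketchempcovy = A^{\#}A$ with $A = \sketchy\SY$, noting $AA^{\#} = \frac{1}{n}\sketchy\GramY\sketchy^\top = \frac{1}{n}\sketchGramY$, and transporting the eigenvectors $\tilde{\mathbf{v}}_j$ through $A^{\#}$ with the normalization $\sqrt{n/\sigma_j(\sketchGramY)}$. Your bookkeeping of the $1/\sqrt{n}$ convention, the orthonormality check, and the restriction to $j \leq \ranksketchGramY$ (since $A^{\#}\tilde{\mathbf{v}}_j = 0$ for the null directions) are all handled correctly.
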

Hence, computing the eigenfunctions of $\sketchempcovy$ provides a basis of $\Hy$ of dimension $p$. 
%
Note that in sketched KPCA, which has been explored via Nystr\"om approximation in \citet{stergeNKPCA20a, stergeNKPCA22}, one solves for $i = 1, \ldots, \my$
\begin{equation}\label{eq:SKPCA}
    f_i = \argmax_{f \in \Hy} ~ \left\{ \langle f, \empcovy f \rangle_{\Hy} : f \in \tilde{\bmH}_{\bmY}, \left\|f\right\|_{\Hy} = 1, f \perp \{f_1, \ldots, f_{i-1}\} \right\}
\end{equation}
where $\tilde{\bmH}_{\bmY} = \spn((\sum_{j=1}^n \sketchyij \psiy(y_j))_{i=1}^{\my})$.
Let $\projY$ be the orthogonal projector onto the basis $(\tilde{e}_1, \ldots, \tilde{e}_{\ranksketchGramY})$, solving \cref{eq:SKPCA} is equivalent to compute the eigenfunctions of the projected empirical covariance operator $\projY \empcovy \projY$, i.e., to compute the KPCA of the projected kernel $\langle \projY \psiy(\cdot), \projY \psiy(\cdot) \rangle_{\Hy}$.
Besides, as for the SVD of $\sketchempcovy$, sketched KPCA needs the SVD of $\sketchGramY$ to obtain its square root, but also requires the additional $\sketchGramY^{1/2} \sketchy \GramY^2 \sketchy^\top \sketchGramY^{1/2}$ SVD computation.


\begin{remark}[Random Fourier Features]
    Another popular kernel approximation is the Random Fourier Features \citep{rahimi2007, rudi2017generalization, li2021unified}.
    They approximate a kernel function as the inner product of small random features using Monte-Carlo sampling when the kernel writes as the Fourier transform of a probability distribution.
    Such an approach, however, defines a new randomly approximated kernel, then a new randomly approximated loss, which can induce learning difficulties due to the bias and variance inherent to the approximation.
    Unlike RFF, sketching is not limited to kernels writing as the Fourier transform of a probability distribution and to defining an approximated loss, it allows the building of a low-dimensional basis within the original feature space of interest.
\end{remark}

\paragraph{\bf Learning the input neural network $\bm{g_{W}}$.}
Equipped with the basis $\tilde{E}=(\tilde{e}_j)_{j\le p}$, we can compute a novel expression of the loss $L(\theta)=L(\tilde{E}, W)$, see \cref{apx:proof} for the proof.
\begin{restatable}{proposition}{propnewloss}\label{prop:new-loss}
Given the pre-trained basis $\tilde{E}=(\tilde{e}_j)_{j\le p}$, $L(\tilde{E}, W)$ expresses as
\begin{equation}\label{eq:loss-exp}
    L(\tilde{E}, W)= \frac{1}{n} \sum_{i=1}^n \left\|g_W(x_i) - \sketchpsiy(y_i)\right\|_{2}^2\,,
\end{equation}
where $\sketchpsiy(y) = (\tilde{e}_1(y), \ldots, \tilde{e}_{\ranksketchGramY}(y))^\top = \widetilde{D}_{\ranksketchGramY}^{-1 / 2} \widetilde{V}_{\ranksketchGramY}^\top \sketchy \kernelvectY \in \reals^{\ranksketchGramY}$, $\widetilde{V}_{\ranksketchGramY}=(\tilde{\mathbf{v}}_{1}, \ldots, \tilde{\mathbf{v}}_{\ranksketchGramY})$, $\widetilde{D}_{\ranksketchGramY}=\operatorname{diag}(\sigma_1(\sketchGramY), \ldots, \sigma_{\ranksketchGramY}(\sketchGramY))$,  and $\kernelvectY = (\kernely(y, y_1), \ldots,\kernely(y, y_n))$.
\end{restatable}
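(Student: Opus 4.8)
\emph{Proof proposal.} The plan is to substitute the proposed architecture $h_\theta = g_{\tilde{E}}\circ g_{W}$ into the surrogate empirical risk of \eqref{eq:init_loss}, expand the squared $\Hy$-norm, and exploit that the pre-trained basis $\tilde{E}=(\tilde{e}_j)_{j\le \ranksketchGramY}$ is orthonormal in $\Hy$ so that the objective collapses to a plain Euclidean least-squares problem in the coordinates $g_{W}(x_i)$. The first thing I would establish is precisely this orthonormality. Since $\SY\SY^{\#} = \tfrac1n\GramY$ as an operator on $\reals^n$ (immediate from the definitions of $\SY$, $\SY^{\#}$ and the reproducing property), combining with the expression for $\tilde{e}_j$ from \cref{prop:proj_exp} gives, for $j,k\le \ranksketchGramY$,
\[
\langle \tilde{e}_j,\tilde{e}_k\rangle_{\Hy}
= \frac{n}{\sqrt{\sigma_j(\sketchGramY)\,\sigma_k(\sketchGramY)}}\;\tilde{\mathbf{v}}_j^\top \sketchy\,\SY\SY^{\#}\,\sketchy^\top\tilde{\mathbf{v}}_k
= \frac{1}{\sqrt{\sigma_j(\sketchGramY)\,\sigma_k(\sketchGramY)}}\;\tilde{\mathbf{v}}_j^\top \sketchGramY\,\tilde{\mathbf{v}}_k
= \delta_{jk},
\]
using that $\sketchGramY=\sketchy\GramY\sketchy^\top$ is symmetric PSD, $\tilde{\mathbf{v}}_k$ is a unit eigenvector with eigenvalue $\sigma_k(\sketchGramY)>0$ for $k\le \ranksketchGramY$. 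Hence $\tilde{E}$ is orthonormal and $\projY = \sum_{j\le\ranksketchGramY}\tilde{e}_j\otimes\tilde{e}_j$ is the orthogonal projector onto $\tilde{\bmH}_{\bmY}$.

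Next I would expand the loss. Writing $z_i := g_{W}(x_i)\in\reals^{\ranksketchGramY}$, so that $h_\theta(x_i)=\sum_{j\le\ranksketchGramY}(z_i)_j\,\tilde{e}_j$, the orthonormality from the previous step together with $\langle\tilde{e}_j,\psiy(y_i)\rangle_{\Hy}=\tilde{e}_j(y_i)$ and $\langle\psiy(y_i),\psiy(y_i)\rangle_{\Hy}=\kernely(y_i,y_i)$ yields
\[
\big\|h_\theta(x_i)-\psiy(y_i)\big\|_{\Hy}^2
= \sum_{j\le\ranksketchGramY}(z_i)_j^2 \;-\; 2\sum_{j\le\ranksketchGramY}(z_i)_j\,\tilde{e}_j(y_i) \;+\; \kernely(y_i,y_i).
\]
Since $\sum_{j\le\ranksketchGramY}\tilde{e}_j(y_i)^2 = \|\projY\psiy(y_i)\|_{\Hy}^2$, completing the square gives $\|z_i-\sketchpsiy(y_i)\|_2^2 + \|(I-\projY)\psiy(y_i)\|_{\Hy}^2$ with $\sketchpsiy(y_i)=(\tilde{e}_1(y_i),\dots,\tilde{e}_{\ranksketchGramY}(y_i))^\top$. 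Averaging over $i$, the residual $\tfrac1n\sum_i\|(I-\projY)\psiy(y_i)\|_{\Hy}^2$ does not involve the trainable weights $W$ (the basis $\tilde{E}$ being frozen beforehand) and can therefore be discarded in the minimization over $W$, which produces exactly \eqref{eq:loss-exp}; it vanishes identically precisely when every $\psiy(y_i)$ already lies in $\tilde{\bmH}_{\bmY}$.

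It then remains to put the coordinates in closed form. Expanding $\SY^{\#}\sketchy^\top\tilde{\mathbf{v}}_j = \tfrac1{\sqrt n}\sum_{k=1}^n(\sketchy^\top\tilde{\mathbf{v}}_j)_k\,\psiy(y_k)$ in the formula of \cref{prop:proj_exp} and using the symmetry of $\kernely$,
\[
\tilde{e}_j(y) = \langle\tilde{e}_j,\psiy(y)\rangle_{\Hy}
= \sqrt{\tfrac{n}{\sigma_j(\sketchGramY)}}\cdot\tfrac1{\sqrt n}\,\tilde{\mathbf{v}}_j^\top\sketchy\,\kernelvectY
= \sigma_j(\sketchGramY)^{-1/2}\,\tilde{\mathbf{v}}_j^\top\sketchy\,\kernelvectY,
\]
with $\kernelvectY=(\kernely(y,y_1),\dots,\kernely(y,y_n))^\top$. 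Stacking the $\ranksketchGramY$ coordinates gives $\sketchpsiy(y)=\widetilde{D}_{\ranksketchGramY}^{-1/2}\widetilde{V}_{\ranksketchGramY}^\top\sketchy\,\kernelvectY$, as claimed.

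The only genuinely delicate point is the first step: getting the normalization constants right so that $\tilde{E}$ is \emph{exactly} orthonormal (equivalently, correctly identifying the induced projector $\projY$), and then being careful in the second step that the leftover projection-residual term is constant in $W$ rather than silently dropped. Everything after that is routine bookkeeping with the eigendecomposition of $\sketchGramY$.
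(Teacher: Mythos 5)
Your proposal is correct and follows essentially the same route as the paper's proof: expand $\|h_\theta(x_i)-\psiy(y_i)\|_{\Hy}^2$ using the orthonormality of $\tilde{E}$ and the reproducing property $\langle\tilde{e}_j,\psiy(y)\rangle_{\Hy}=\tilde{e}_j(y)$, then observe that the leftover terms ($\kernely(y_i,y_i)$ and $\|\sketchpsiy(y_i)\|_2^2$, whose difference is your projection residual) do not depend on $W$. The only difference is that you additionally verify the orthonormality of $\tilde{E}$ and the closed-form $\sketchpsiy(y)=\widetilde{D}_{\ranksketchGramY}^{-1/2}\widetilde{V}_{\ranksketchGramY}^\top\sketchy\kernelvectY$, which the paper takes as given from \cref{prop:proj_exp}; this is fine and makes the argument self-contained.
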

%
Finally, given $\tilde{E}$ and Prop. \ref{prop:new-loss}, learning the full network $h_{\theta}$ boils down to learning the input neural network $g_{W}$ and thus finding a solution $\hat{W}$ to
\begin{equation}
    \min_{W \in \mathcal{W}} ~ \frac{1}{n} \sum_{i=1}^n \left\|g_W(x_i) - \sketchpsiy(y_i)\right\|_{2}^2\,.\label{eq:newloss}
\end{equation}
%
A classical stochastic gradient descent algorithm can then be applied to learn $W$.
Compared to the initial loss \eqref{eq:init_loss}, the relevance of \eqref{eq:newloss} is governed by the quality of the approximation of $\empcovy$ by $\sketchempcovy$.
%
%
If our approach regularises the solution (the range of the surrogate estimator $h_\theta$ is restricted from $\Hy$ to $\basis$), this restriction may not be limiting if we set $\my \geq p$ high enough to capture all the information contained in $\empcovy$.
We discuss strategies to correctly set $\my$ at the beginning of \cref{sec:xps}.

\begin{remark}[Beyond the square loss]
    Equipped with such an architecture $g_W \circ g_E$, one can easily consider any loss that writes $\Delta(y,y') = c(\|\psiy(y) - \psiy(y^\prime)\|^2_{\Hy})$, where $c : \reals_+ \rightarrow \reals_+$ is a non-decreasing sub-differentiable function.
    For instance, in the presence of output outliers, one could typically consider robust losses such as the Huber or $\epsilon$-insensitive losses, that correspond to different choices of function $c$ \citep{laforgue2020duality,huber1964robust,steinwart-epsilon}.
\end{remark}

\subsection{The pre-image problem at inference time} 

\begin{algorithm}[!t]
\caption{Deep Sketched Output Kernel Regression (DSOKR)}
\label{alg:dsokr}
\SetKwInOut{Input}{input}
\SetKwInOut{Init}{init}
\Input{training $\{(x_i, y_i)\}_{i=1}^n$, validation $\{(x_i^{\operatorname{val}}, y_i^{\operatorname{val}})\}_{i=1}^{n_{\operatorname{val}}}$ pairs, test inputs $\{x_i^{\operatorname{te}}\}_{i=1}^{n_{\operatorname{te}}}$, candidate outputs test inputs $\{y_i^{\operatorname{c}}\}_{i=1}^{n_{\operatorname{c}}}$, normalized output kernel $\kernely$, sketching matrix $\sketchy \in \reals^{\my \times n}$, neural network $g_W$}\vspace{0.2cm}
\Init{$\sketchGramY = \sketchy \GramY \sketchy^\top \in \reals^{\my \times \my}$ where $\GramY = (\kernely(y_i, y_j))_{1 \leq i, j \leq n} \in \reals^{n \times n}$} \vspace{0.2cm}

\tcp{1. a. Training of $g_E$: computations for the basis $\widetilde{E}$}

$\bullet$ Construct $\widetilde{D}_{\ranksketchGramY} \in \reals^{\ranksketchGramY \times \ranksketchGramY}$, $\widetilde{V}_{\ranksketchGramY} \in \reals^{\my \times \ranksketchGramY}$  such that $\widetilde{V}_{\ranksketchGramY} \widetilde{D}_{\ranksketchGramY} \widetilde{V}_{\ranksketchGramY}^\top = \sketchGramY$ (SVD of $\sketchGramY$)

$\bullet$ $\widetilde{\Omega} = \widetilde{D}_{\ranksketchGramY}^{-1 / 2} \widetilde{V}_{\ranksketchGramY}^\top \in \reals^{\ranksketchGramY \times \my}$ \vspace{0.2cm}

\tcp{1. b. Training of $g_W$: solving the surrogate problem}

$\bullet$ $\sketchpsiy(y_i) = \widetilde{\Omega} \sketchy k^{y_i} \in \reals^{\ranksketchGramY}, \forall ~ 1 \leq i \leq n$, $\sketchpsiy(y_i^{\operatorname{val}}) = \widetilde{\Omega} \sketchy k^{y_i^{\operatorname{val}}} \in \reals^{\ranksketchGramY}, \forall ~ 1 \leq i \leq n_{\operatorname{val}}$

$\bullet$ $\hat{W} = \underset{W \in \mathcal{W}}{\argmin} ~ \frac{1}{n} \sum_{i=1}^n \left\|g_W(x_i) - \sketchpsiy(y_i)\right\|_2^2$ (training of $g_W$ with training $\{(x_i, \sketchpsiy(y_i))\}_{i=1}^n$ and validation $\{(x_i^{\operatorname{val}}, \sketchpsiy(y_i^{\operatorname{val}}))\}_{i=1}^{n_{\operatorname{val}}}$ pairs and Mean Squared Error loss) \vspace{0.2cm}

\tcp{2. Inference}

$\bullet$ $\sketchpsiy(y_i^{\operatorname{c}}) = \widetilde{\Omega} \sketchy k^{y_i^{\operatorname{c}}} \in \reals^{\ranksketchGramY}, \forall ~ 1 \leq i \leq n_{\operatorname{c}}$

$\bullet$ $f_{\hat{\theta}}(x_i^{\operatorname{te}}) = y_j^{\operatorname{c}}$ where $j = \underset{1 \leq j \leq n_{\operatorname{c}}}{\argmax} ~ g_{\hat{W}}(x_i^{\operatorname{te}})^\top \tilde{\psi}(y_j^{\operatorname{c}})$, $\forall ~ 1 \leq i \leq n_{\operatorname{te}}$ \vspace{0.2cm}

\Return{$f_{\hat{\theta}}(x_i^{\operatorname{te}}), \forall ~ 1 \leq i \leq n_{\operatorname{te}}$}
\end{algorithm}

We focus now on the decoding part, i.e., on computing
\[
d \circ h_{\hat{\theta}}(x) = \argmin_{y \in \bmY} ~ \kernely(y, y) - 2 g_{\hat{W}}(x)^\top \sketchpsiy(y) = \argmax_{y \in \bmY} ~ g_{\hat{W}}(x)^\top \sketchpsiy(y)
%
\]
if we assume $\kernely$ to be normalized, i.e. $\kernely(y, y^\prime) = 1, \forall y, y^\prime \in \bmY$.
For a test set $X^{\operatorname{te}} = (x_1^{\operatorname{te}}, \ldots, x_{n_{\operatorname{te}}}^{\operatorname{te}}) \in \bmX^{n_{\operatorname{te}}}$ and a candidate set $Y^{\operatorname{c}} = (y_1^{\operatorname{c}}, \ldots, y_{n_{\operatorname{c}}}^{\operatorname{c}}) \in \bmY^{n_{\operatorname{c}}}$, for all $1 \leq i \leq n_{te}$, the prediction is given by
\begin{equation}
    f_{\hat{\theta}}(x_i^{\operatorname{te}}) = y_j^{\operatorname{c}} \quad \text{where} \quad j = \underset{1 \leq j \leq n_{\operatorname{c}}}{\argmax} ~ g_{\hat{W}}(x_i^{\operatorname{te}})^\top \tilde{\psi}(y_j^{\operatorname{c}})\,.
\end{equation}


Hence, the decoding is particularly suited to problems for which we have some knowledge of the possible outcomes, such as molecular identification problems \citep{Brouard_ismb2016}.
%
%
%
When the output kernel is differentiable, it may also be solved using standard gradient-based methods.
Finally, some ad-hoc ways to solve the pre-image problem exist for specific kernels, see e.g., \citet{cortes2007} for the sequence prediction via n-gram kernels, or \citet{Korba18} for label ranking via Kemeny, Hamming, or Lehmer embeddings.
The DSOKR framework is summarized in \cref{alg:dsokr}.

\section{Experiments}
\label{sec:xps}

In this section, we first present a range of strategies to select the sketching size and an analysis of our proposed DSOKR on a synthetic dataset.
Besides, we show the effectiveness of DSOKR through its application to two real-world Supervised Graph Prediction (SGP) tasks: SMILES to Molecule and Text to Molecule.
The code to reproduce our results is available at: \href{https://github.com/tamim-el/dsokr}{https://github.com/tamim-el/dsokr}.

%

\paragraph{\bf Sketching size selection strategy.} A critical hyper-parameter of DSOKR is the sketching size $\my$.
Indeed, the optimal choice is the dimension of the subspace containing the output features.
However, to estimate this dimension, one has to compute the eigenvalues of $\GramY$, which has the prohibitive complexity of $\bmO(n^3)$.
Hence, a first solution is to compute the Approximate Leverage Scores (ALS) as described in \citet{elalaoui_NIPS2015}.
This is an approximation of the eigenvalues of $\GramY$ that relies on sub-sampling $n_S < n$ entries within the whole training set.
%
%
Moreover, we use another technique that we call \textit{Perfect h}.
Considering any pair $(x, y)$ in a validation set, we replace $g_{W}(x)$ by the ``perfect'' coefficients of the expansion, i.e., for each $j=1, \ldots, p$, ~$\langle \tilde{e}_j, \psiy(y) \rangle_{\Hy}$ and define ``perfect'' surrogate estimator $h_{\psiy}$ as follows
\looseness-1
\begin{equation}
    h_{\psiy}(x) = \sum_{j=1}^{\ranksketchGramY} \langle \tilde{e}_j, \psiy(y) \rangle_{\Hy} ~ \tilde{e}_j = \sum_{j=1}^{\ranksketchGramY} \sketchpsiy(y)_j ~ \tilde{e}_j\,.
\end{equation}
Then, we evaluate the performance of this ``perfect'' surrogate estimator $h_{\psiy}$ on a validation set to select $\my$.
%
Hence, \textit{Perfect h} allows to select the minimal $\my$ in the range given by ALS such that the performance of $h_{\psiy}$ reaches an optimal value.
%




 \subsection{Analysis of DSOKR on Synthetic Least Squares Regression}
\label{subsec:synth_lsr}

\begin{figure}[!t]
\centering
\includegraphics[width=0.288\textwidth]{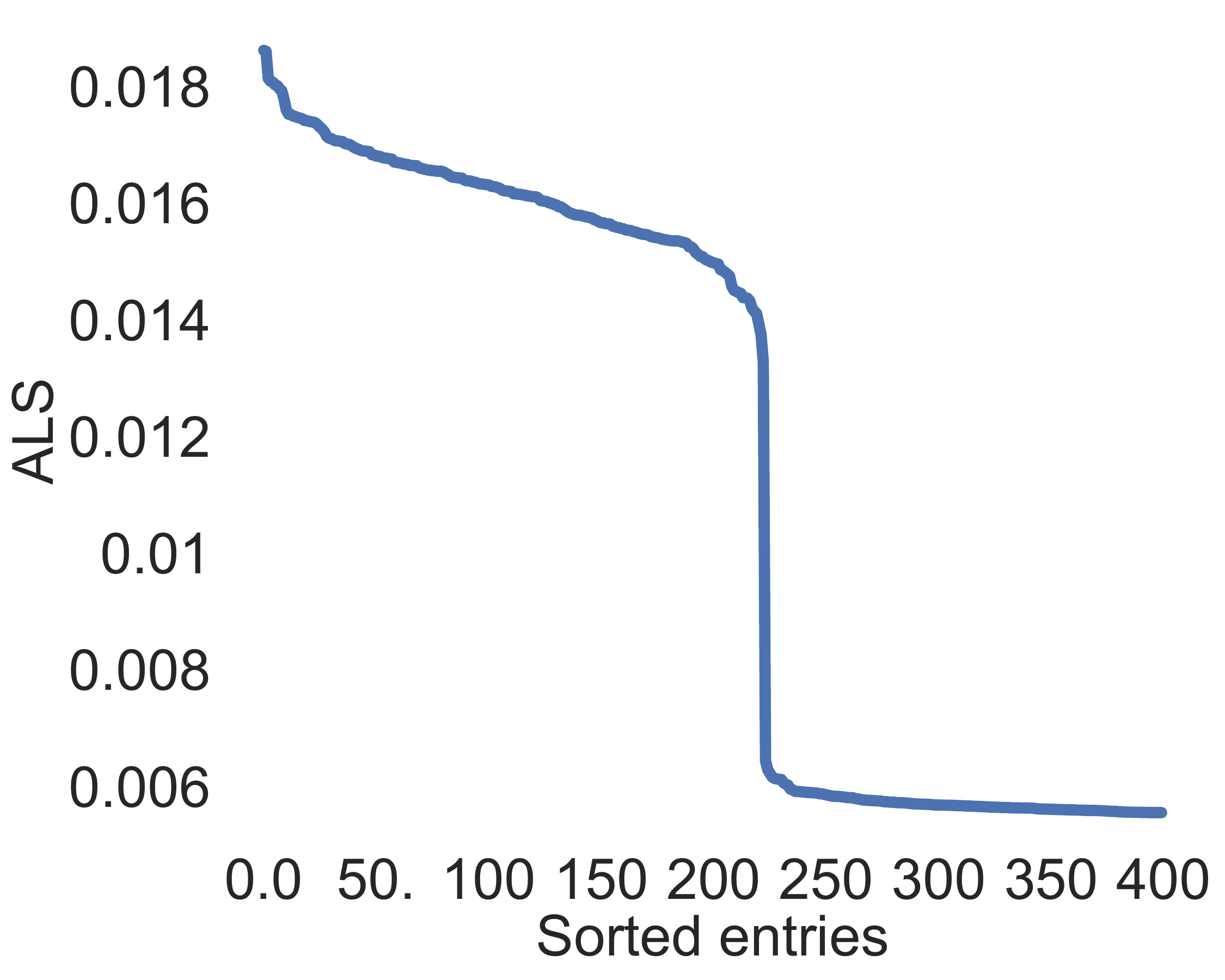}
\hfill
\includegraphics[width=0.288\textwidth]{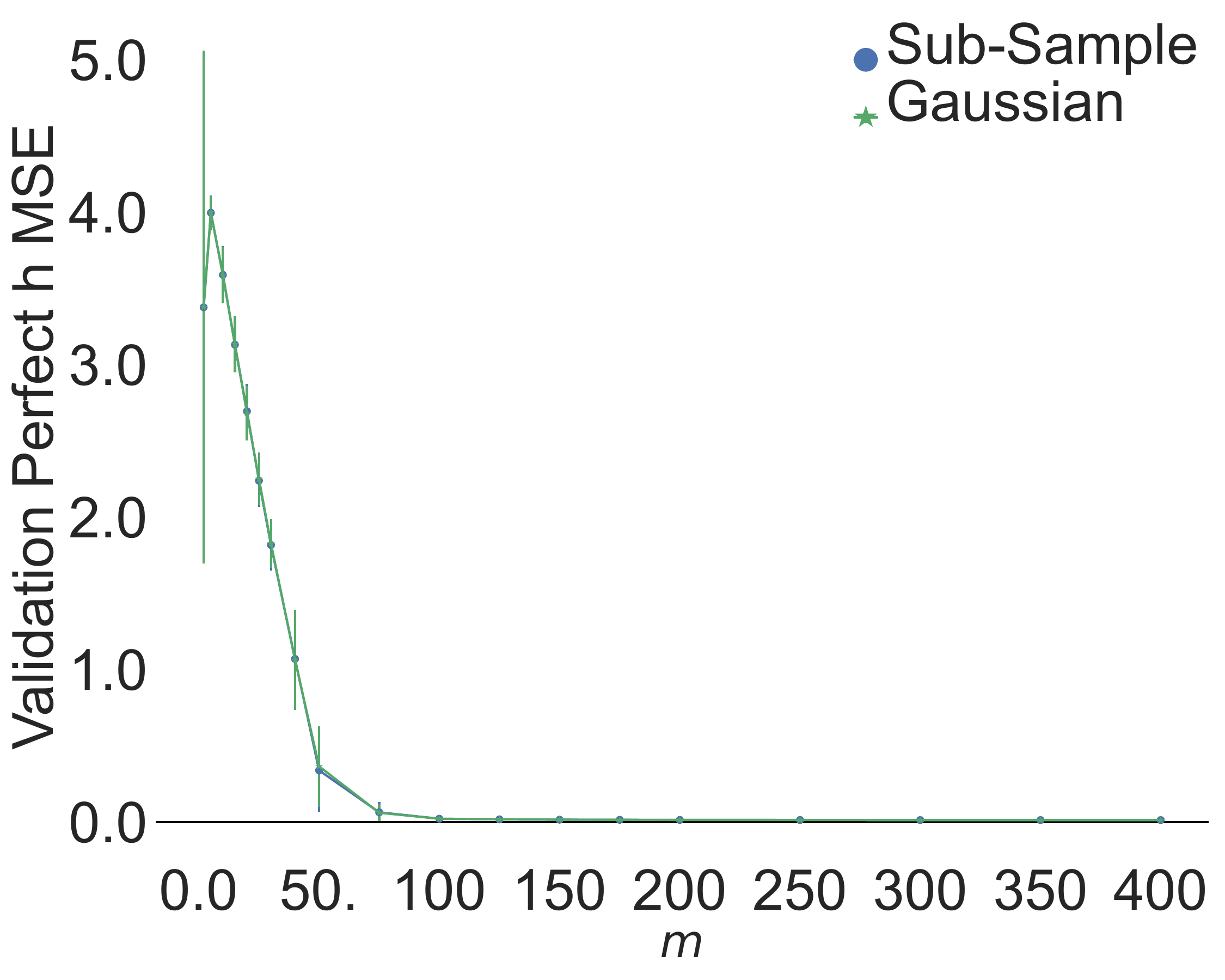}
\hfill
\includegraphics[width=0.288\textwidth]{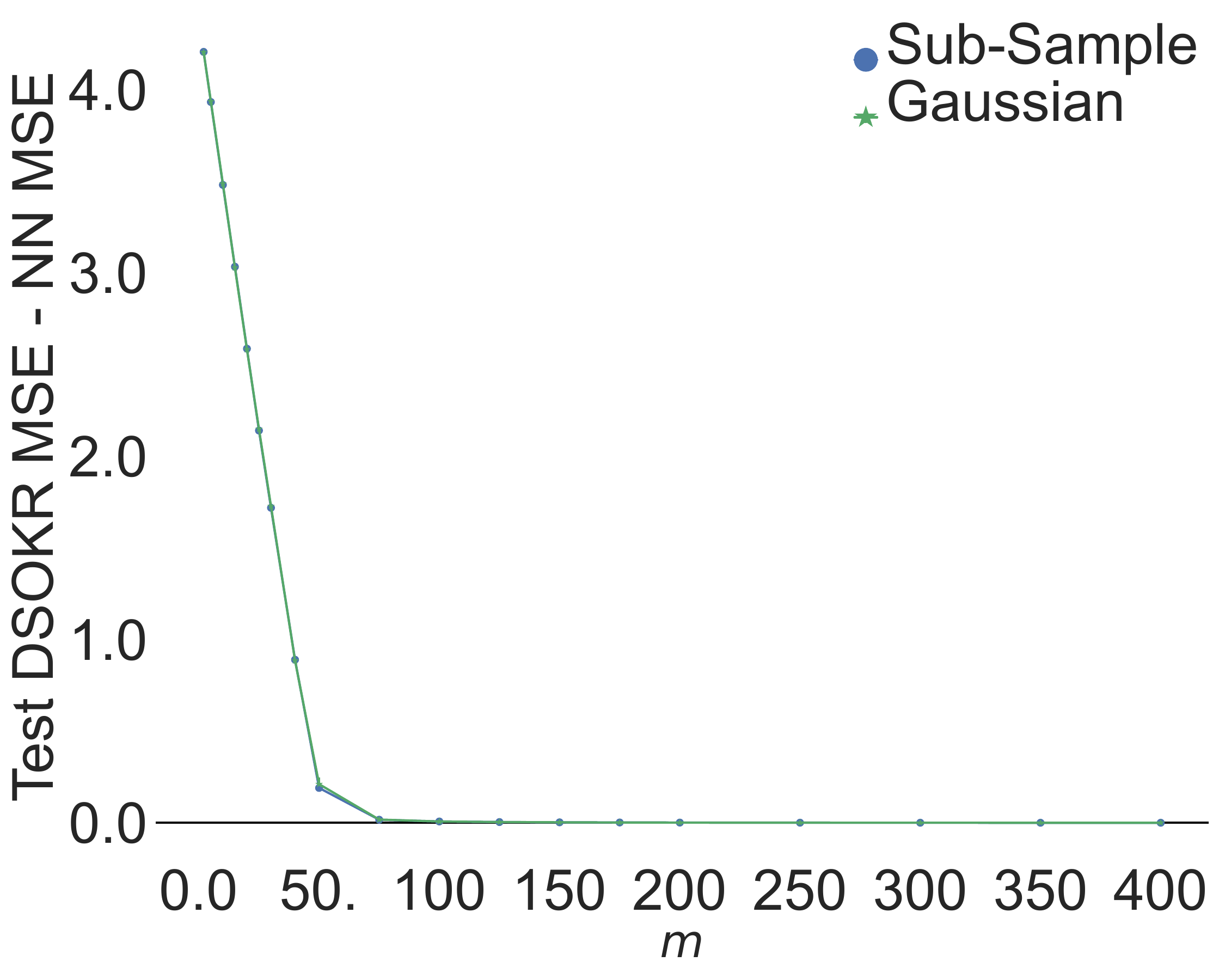}
\caption{Sorted 400 highest ALS (left), validation MSE of \textit{Perfect h} w.r.t. $\my$ (center) and the difference between test MSE of DSOKR and NN w.r.t. $\my$ (right).}
\label{fig:toy_figures}
\end{figure}
 
\paragraph{\bf Dataset.} We generate a synthetic dataset of least-squares regression, using then a linear output kernel, with $n=50,000$ training data points, $\bmX = \reals^{2,000}$, $\bmY = \reals^{1,000}$, and $\Hy = \bmY = \reals^{1,000}$.
The goal is to build this dataset such that the outputs lie in a subspace of $\bmY$ of dimension $d = 50 < 1,000$.
Hence, given $d$ randomly drawn orthonormal vectors $(u_j)_{j=1}^d$, for all $1 \leq i \leq n$, the outputs are such that $y_i = \sum_{j=1}^d \alpha(x_i)_j u_j + \varepsilon_i$, where $\alpha$ is a function of the inputs and $\varepsilon_i \sim \bmN(0, \sigma^2 I_{1,000})$ are i.i.d. with $\sigma^2 = 0.01$.
We generate i.i.d. normal distributed inputs $x_i \sim \bmN(0, C)$, where $(\sigma_j(C) = j^{-1/2})_{j=1}^{2,000}$ and its eigenvectors are randomly drawn.
Finally, we draw $H \in \reals^{d \times 2,000}$ with i.i.d. coefficients from the standard normal distribution, and the outputs are given for $1 \leq i \leq n$ by
\looseness-1
\begin{equation}
    y_i = U H x_i + \varepsilon_i\,,
\end{equation}
where $U = (u_1, \ldots, u_d) \in \reals^{1,000 \times d}$.
We generate validation and test sets of $n_\textnormal{val}=5,000$ and $n_\textnormal{te}=10,000$ points in the same way.

\paragraph{\bf Experimental settings.} We first compute the ALS as described above.
%
%
We take as regularisation penalty $\lambda = 10^{-4}$, sampling parameter $n_S = \sqrt{n}$ and probability vector $(p_i = 1/n)_{i=1}^n$ (uniform sampling).
Then, we perform the sketching size selection strategy \textit{Perfect h}.
%
%
Note that using a linear output kernel, $\psiy : y \in \reals^{1,000} \mapsto y$, then $\tilde{e}_i = (1/\sqrt{\sigma_i(\sketchGramY)}) \tilde{\mathbf{v}}_{i}^\top \sketchy Y$, where $Y = (y_1, \ldots, y_n)^\top \in \reals^{n \times 1,000}$, and
\looseness-1
\begin{equation}
    h_{\hat{\theta}}(x) = Y^\top \sketchy^\top \widetilde{V}_{\ranksketchGramY} \widetilde{D}_{\ranksketchGramY}^{-1 / 2} g_{\hat{W}}(x)\,.
\end{equation}
Finally, we perform our DSOKR model whose neural network $g_W$ is a Single-Layer Perceptron, i.e. with no hidden layer, and compare it with an SLP whose output size is $1,000$, and trained with a Mean Squared Error loss, that we call ``NN''.
%
We select the optimal number of epochs thanks to the validation set and evaluate the performance via the MSE.
We use the ADAM \citep{kingma2014adam} optimizer.
For the \textit{Perfect h} and DSOKR models and any sketching size $\my \in [2, 400]$, we average the results over five replicates of the models.
We use uniform sub-sampling without replacement and Gaussian sketching distributions.

\paragraph{\bf Experimental results.} \cref{fig:toy_figures} (left) presents the sorted 400 highest leverage scores.
This gives a rough estimate of the optimal sketching size since the leverage scores converge to a minimal value starting from 200 approximately, which is an upper bound of the true basis dimension $d = 50$.
\cref{fig:toy_figures} (center) shows that \textit{Perfect h} is a relevant strategy to fine-tune $\my$ since the obtained optimal value is $\my = 75$, which is very close to $d = 50$.
This small difference comes from the added noise $\varepsilon_i$.
Moreover, this value corresponds to the optimal value based on the DSOKR test MSE.
In fact, \cref{fig:toy_figures} (right) presents the performance DSOKR for many $\my$ values compared with NN.
DSOKR performance converges to the NN's performance for $\my = 75$ as well.
%
%
Hence, we show that DSOKR attains optimal performance if its sketching size is set as the dimension of the output marginal distribution's range, which can be estimated thanks to the ALS and the \textit{Perfect h} strategies.
There is no difference between sub-sample and Gaussian sketching since the dataset is rather simple.
Moreover, note that the neural network of the DSOKR model for $\my = 75$ contains $150,075$ parameters, whereas the NN model contains $2,001,000$ parameters.
Then, our sketched basis strategy, even in the context of multi-output regression, allows to reduce the size of the last layer, simplifying the regression problem and reducing the number of weights to learn.

\subsection{SMILES to Molecule: SMI2Mol}
\label{subsec:s2m}

\paragraph{\bf Dataset.} We use the QM9 molecule dataset \citep{ruddigkeit_enumeration_2012, ramakrishnan_quantum_2014}, containing around 130,000 small organic molecules. 
These molecules have been processed using RDKit\footnote{RDKit: Open-source cheminformatics. \url{https://www.rdkit.org}}, with aromatic rings converted to their Kekule form and hydrogen atoms removed. We also remove molecules containing only one atom. Each molecule contains up to 9 atoms of Carbon, Nitrogen, Oxygen, or Fluorine, along with three types of bonds: single, double, and triple. As input features, we use the Simplified Molecular Input Line-Entry System (SMILES), which are strings describing their chemical structure. We refer to the resulting dataset as \textbf{SMI2Mol}.

\paragraph{\bf Experimental set-up.} Using all SMILES-Molecule pairs, we build five splits using different seeds. Each split has 131,382 training samples, 500 validation samples, and 2,000 test samples.
In DSOKR, $g_{W}$ is a Transformer \citep{vaswani_attention_2017}. 
%
The SMILES strings are tokenized into character sequences as inputs for the Transformer encoder. To define the loss on output molecules, we cross-validate several graph kernels, including the Weisfeiler-Lehman subtree kernel (WL-VH) \citep{JMLR:v12:shervashidze11a}, the neighborhood subgraph pairwise distance kernel (NSPD) \citep{costa_fast_2010}, and the core Weisfeiler-Lehman subtree kernel (CORE-WL) \citep{nikolentzos_degeneracy_2018}. 
We use the implementation of the graph kernels provided by the Python library GraKel \citep{grakel}. 
We employ SubSample sketching for the output kernel.
The sketching size $m$ is fixed using our proposed \textit{Perfect h} strategy. Our method is benchmarked against SISOKR \citep{elahmad2024sketch}, NNBary-FGW \citep{brogat-motte_learning_2022}, and ILE-FGW \citep{brogat-motte_learning_2022}. For ILE-FGW and SISOKR, we additionally use SubSample sketching \citep{rudi2015less} for input kernel approximation. To ensure a fair comparison, both SISOKR and ILE-FGW adopt the 3-gram kernel for the input strings, whereas NNBary-FGW and DSOKR use a Transformer encoder. The performance is evaluated using Graph Edit Distance (GED), implemented by the NetworkX package \citep{hagberg_exploring_2008}.\looseness-1

\begin{figure}[!t]
\centering
%
\includegraphics[width=0.32\textwidth]{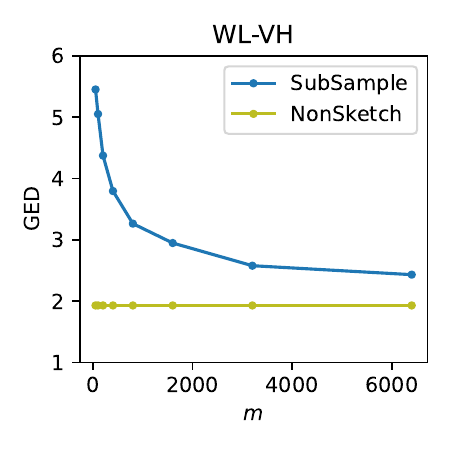}
\hfill
\includegraphics[width=0.32\textwidth]{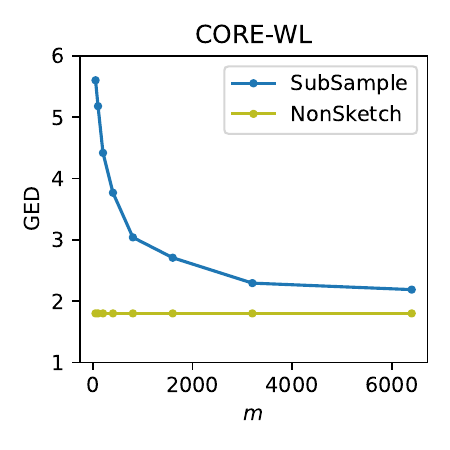}
\hfill
\includegraphics[width=0.32\textwidth]{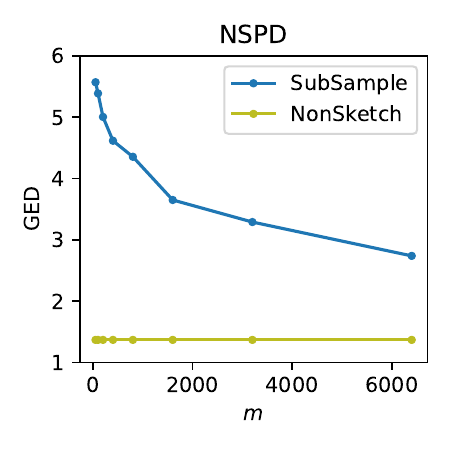}
\caption{The GED w/ edge feature w.r.t. the sketching size $\my$ for \textit{Perfect h} for three graph kernels on SMI2Mol ($\my > 6400$ is too costly computationally).}
\label{fig:hperfect_smi2mol}
\end{figure}


\begin{table}[t!]
\caption{Edit distance of different methods on SMI2Mol test set}
\begin{center}
\begin{tabular}{lcc}
\toprule
 & GED w/o edge feature $\downarrow$ &  GED w/ edge feature $\downarrow$ \\
\midrule
SISOKR & $3.330 \pm 0.080$ & $4.192 \pm 0.109$ \\
NNBary-FGW & $5.115 \pm 0.129$ & - \\
Sketched ILE-FGW & $2.998 \pm 0.253$  & - \\
\midrule
DSOKR & $\textbf{1.951} \pm \textbf{0.074}$& $\textbf{2.960} \pm \textbf{0.079}$\\
\bottomrule
\end{tabular}
\end{center}
\label{tab:expr_s2m_5s}
\end{table}

\paragraph{\bf Experimental results.} 
Figure \ref{fig:hperfect_smi2mol} displays the GED obtained by \textit{Perfect h} concerning various graph kernels. Based on this visualization, we have set the sketching sizes of WL-VH, CORE-WL, and NPSD to 3200, 3200, and 6400 respectively. Table \ref{tab:expr_s2m_5s} showcases the performance of various methods of SGP. Notably, DSOKR outperforms all baseline methods. It is evident that while graph kernels and the fused Gromov-Wasserstein (FGW) distance induce a meaningful feature space, the capabilities of SISOKR and ILE-FGW are constrained by the input kernels, thus highlighting the relevance of our proposed method. For further insight, a comparison of some prediction examples is provided in \cref{fig:smi2mol_examples} and \cref{apx:add_xps_s2m}.
\looseness-1

\begin{figure}[!t]
        \centering
        \begin{subfigure}[b]{0.19\textwidth}
            \centering
            \includegraphics[width=\textwidth]{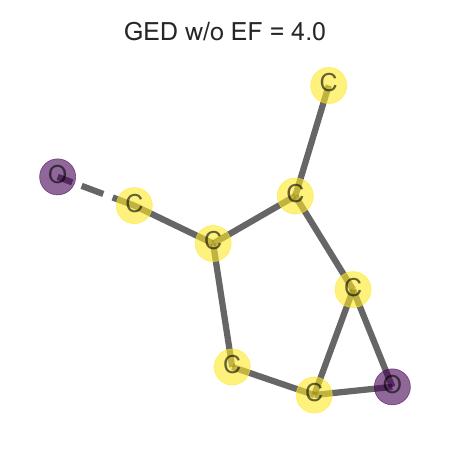}
        \end{subfigure}
        \hfill
        \begin{subfigure}[b]{0.19\textwidth}  
            \centering 
            \includegraphics[width=\textwidth]{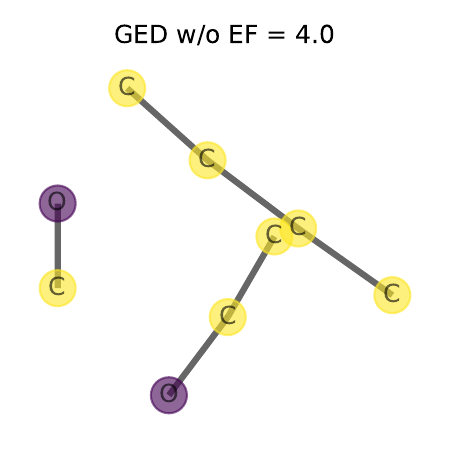} 
        \end{subfigure}
        \hfill
        \begin{subfigure}[b]{0.19\textwidth}  
            \centering 
            \includegraphics[width=\textwidth]{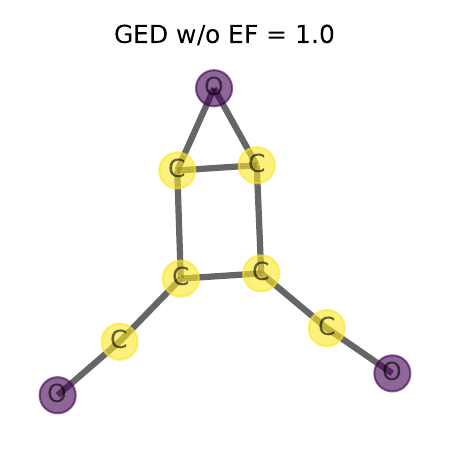} 
        \end{subfigure}
        \hfill
        \begin{subfigure}[b]{0.19\textwidth}  
            \centering 
            \includegraphics[width=\textwidth]{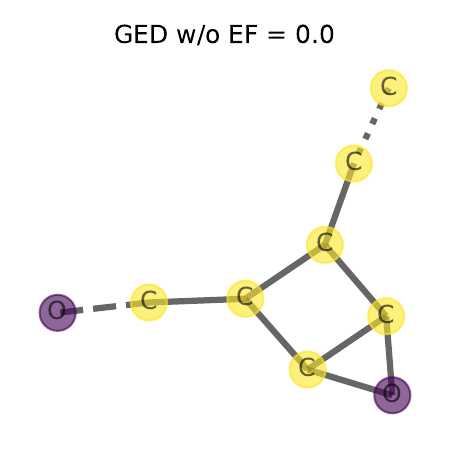} 
        \end{subfigure}
        \hfill
        \begin{subfigure}[b]{0.19\textwidth}  
            \centering 
            \includegraphics[width=\textwidth]{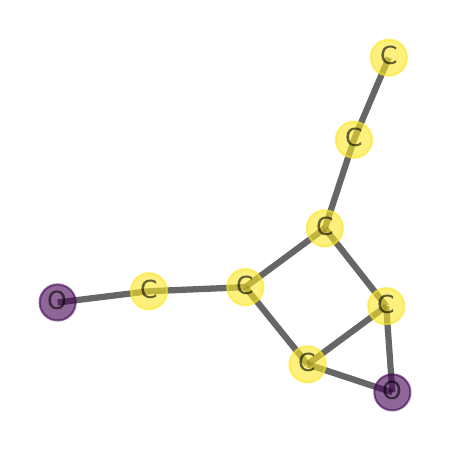} 
        \end{subfigure}
        \vskip\baselineskip
        \begin{subfigure}[b]{0.19\textwidth}
            \centering
            \includegraphics[width=\textwidth]{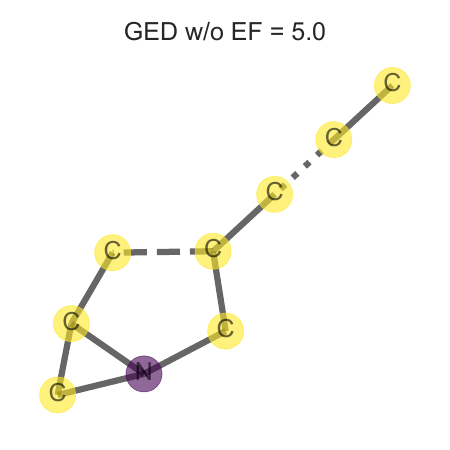}
            \caption{SISOKR}
        \end{subfigure}
        \hfill
        \begin{subfigure}[b]{0.19\textwidth}  
            \centering 
            \includegraphics[width=\textwidth]{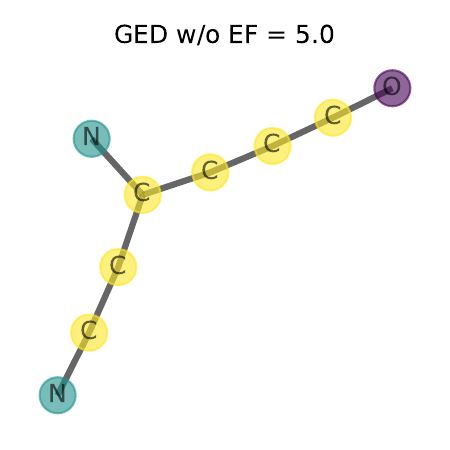} 
            \caption{NNBary}
        \end{subfigure}
        \hfill
        \begin{subfigure}[b]{0.19\textwidth}  
            \centering 
            \includegraphics[width=\textwidth]{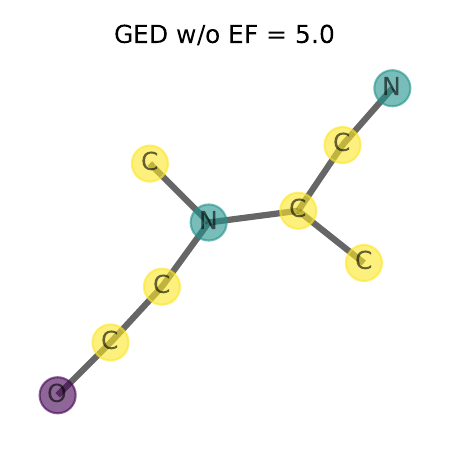} 
            \caption{ILE}
        \end{subfigure}
        \hfill
        \begin{subfigure}[b]{0.19\textwidth}  
            \centering 
            \includegraphics[width=\textwidth]{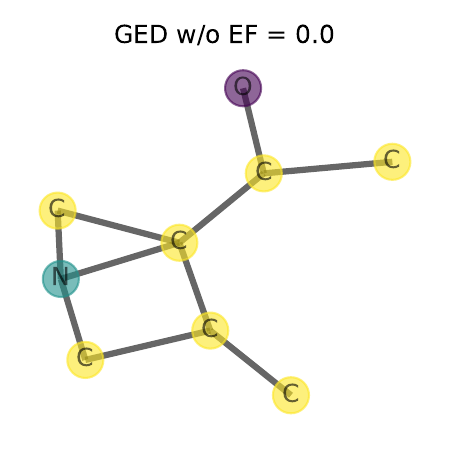}
            \caption{DSOKR}
        \end{subfigure}
        \hfill
        \begin{subfigure}[b]{0.19\textwidth}  
            \centering 
            \includegraphics[width=\textwidth]{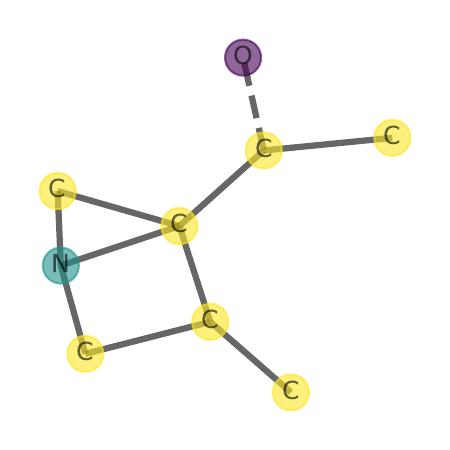}
            \caption{True target}
        \end{subfigure}
        \caption{\small Predicted molecules on the SMI2Mol dataset.} 
        \label{fig:smi2mol_examples}
\end{figure}

\subsection{Text to Molecule: ChEBI-20}
\label{subsec:t2m}

\begin{figure}[!t]
\centering
\includegraphics[width=0.4\textwidth]{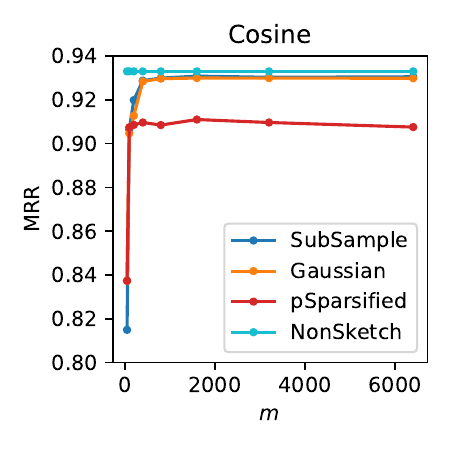}\qquad
\includegraphics[width=0.4\textwidth]{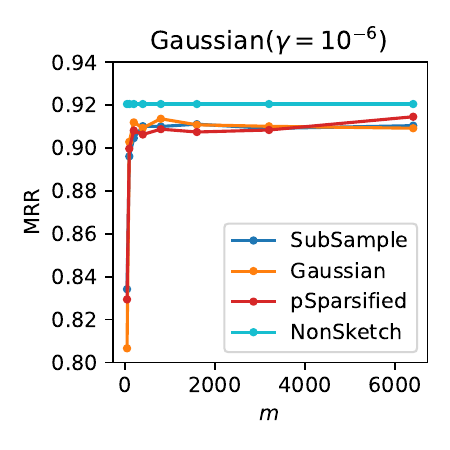}
\caption{The MRR scores on ChEBI-20 validation set w.r.t. $\my$ for \textit{Perfect h} when the output kernel is Cosine or Gaussian on the ChEBI-20 dataset.}
\label{fig:chebi_hperfect}
\end{figure}

\begin{table}[t!]
\caption{Performance of different methods on ChEBI-20 test set. All the methods based on NNs use SciBERT as input text encoder for fair comparison. The number in the ensemble setting indicates the number of single models used.}
\begin{center}
\begin{tabular}{lccccc }
\toprule
  & Hits@1 $\uparrow$ & Hits@10 $\uparrow$ & MRR $\uparrow$ \\
\midrule
SISOKR & 0.4\% & 2.8\% &  0.015 \\
SciBERT Regression & 16.8\% & 56.9\% & 0.298 \\
\hdashline
CMAM - MLP & 34.9\% & 84.2\% &  0.513 \\
CMAM - GCN   & 33.2\% & 82.5\%  & 0.495 \\
CMAM - Ensemble (MLP$\times 3$)  &39.8\%  & 87.6\%  & 0.562 \\
CMAM - Ensemble (GCN$\times 3$)  & 39.0\% &  87.0\% &  0.551 \\
CMAM - Ensemble (MLP$\times 3$ + GCN$\times 3$) & 44.2\% & \textbf{88.7}\%  & 0.597   \\
\midrule
DSOKR - SubSample Sketch & 48.2\% & 87.4\% &   0.624   \\
DSOKR - Gaussian Sketch & 49.0\% & 87.5\%  &   0.630 \\
DSOKR - Ensemble (SubSample$\times 3$) & \textbf{51.0}\% & 88.2\% & \textbf{0.642}  \\
DSOKR - Ensemble (Gaussian$\times 3$) & 50.5\% & 87.9\%  & \textbf{0.642} \\
DSOKR - Ensemble (SubSample$\times 3$ + Gaussian$\times 3$) & 50.0\% & 88.3\% & 0.640 \\
\bottomrule
\end{tabular}
\end{center}
\label{tab:expr_t2m}
\end{table}

\paragraph{\bf Dataset.}
The ChEBI-20 \citep{edwards_text2mol_2021} dataset contains 33,010 pairs of compounds and descriptions.
The compounds come from PubChem \citep{kim_pubchem_2016, kim_pubchem_2019}, and their descriptions (more than 20 words) from the Chemical Entities of Biological Interest (ChEBI) database \citep{hastings_chebi_2016}.
The dataset is divided as follows: 80\% for training, 10\% for validation, and 10\% for testing.
The candidate set contains all compounds.
The mean and median number of atoms per molecule is 32 and 25 respectively, and the mean and median number of words per description is 55 and 51 respectively.

\paragraph{\bf Experimental set-up.} For our method DSOKR, we use SciBERT \citep{beltagy_scibert_2019} with an additional linear layer to parameterize $g_{W}$. The maximum length of the input tokens is set to 256. Mol2vec \citep{jaeger_mol2vec_2018} is used as the output molecule representation, which is a vector of dimension 300. Based on the Mol2vec representation, we conduct cross-validation using the following kernels: Cosine kernel and Gaussian kernel with gamma chosen from $\{10^{-9}, 10^{-6}, 10^{-3}, 1\}$, along with the following three sketches: sub-sampling \citep{rudi2015less}, Gaussian \citep{Yang2017}, and $p$-sparsified \citep{elahmad2023fast}. The sketching size for all combinations of the output kernels and sketches is determined using the \textit{Perfect h} strategy. As for the baselines, we consider SciBERT Regression, Cross-Modal Attention Model (CMAM) \citep{edwards_text2mol_2021}, and SISOKR. In the case of SciBERT Regression, we address the regression problem using Mean Squared Error loss, where the output space is the embedding space of Mol2vec, within a function space parameterized by SciBERT. CMAM aims to enhance the cosine similarity between the text embedding and the corresponding molecule in true pairs by employing a contrastive loss function. Specifically, the former is derived from SciBERT, while the latter is generated using either a multi-layer perceptron (MLP) or a graph convolutional network (GCN) atop the Mol2vec representation. We reproduce the results of CMAM with the codes\footnote{\url{https://github.com/cnedwards/text2mol}} released by \citet{edwards_text2mol_2021}. In SISOKR, we use SciBERT embeddings as input features, leveraging the cosine kernel atop them. We maintain the identical output kernel sketching setup as in DSOKR. For all methods, we train the model using the best hyper-parameters with three random seeds and report the one with the best validation performance. The performance is evaluated with mean reciprocal rank (MRR), Hits@1 and Hits@10.
We could not benchmark AMAN \citep{zhao_adversarial_2024}, as no implementation is publicly available.

\paragraph{\bf Ensemble.} In \citet{edwards_text2mol_2021}, the authors propose an ensemble strategy to enhance the results by aggregating the ranks obtained by different training of their models.
If for each $1 \leq t \leq T$, $R_t$ denotes the ranking returned by the model $t$, the new score is computed as follows
\begin{equation}
    s(y_i) = \sum_{t = 1}^T w_t R_t(y_i) \quad s.t. \quad \sum_{i=1}^T \omega_t = 1
\end{equation}
for each $y_i$ in the candidate set.
In our case, the computation of DSOKR's last layer $g_{E}$ depends on a draw of the sketching matrix $\sketchy$, which means that DSOKR is particularly well-suited to the aggregation via multiple draws of the sketching matrix $\sketchy_t$ and the training of the corresponding neural networks $g_{W_t}$.
%
Hence, we explore two more ways of aggregating multiple DSOKR models, by averaging or maximizing these models' scores, i.e. for any input $x$ and candidate $y$,
\begin{equation}
    s(x, y) = \sum_{t = 1}^T \omega_t ~ g_{\hat{W}_t}(x)^\top \sketchpsiy_t(y) \quad \text{or} \quad s(x, y) = \argmax_{1 \leq t \leq T} ~ g_{\hat{W}_t}(x)^\top \sketchpsiy_t(y)\,.
\end{equation}
We explore all three ensemble methods for DSOKR models and subsequently select the optimal one based on its validation performance.

\paragraph{\bf Experimental results.} 
Figure \ref{fig:chebi_hperfect} illustrates the validation MRR scores with \textit{Perfect h}, for many $m$ values, and either Cosine or Gaussian output kernels.
%
It is evident that for both the Cosine kernel and Gaussian kernel (with $\gamma = 10^{-6}$) employing various sketching methods, the MRR score stabilizes as the sketching size exceeds 100, and that Cosine outperforms Gaussian.
This observation allows us to choose $m = 100$, smaller than the original Mol2vec dimension, which is 300. 
Table \ref{tab:expr_t2m} presents a comprehensive comparison of DSOKR with various baseline models. Firstly, comparing DSOKR with SISOKR reveals the critical importance of employing deep neural networks when dealing with complex structured inputs and DSOKR makes it possible in the case of functional output space. Secondly, the notable improvement over SciBERT Regression underscores the value of employing kernel sketching to derive more compact and better output features, thereby facilitating regression problem-solving. Lastly, DSOKR outperforms the sota CMAP for both single and ensemble models. See \cref{apx:add_xps_t2m} for more details.
\looseness-1

\section{Conclusion}
\label{sec:ccl}

We designed a new architecture of neural networks able to minimize kernel-induced losses for structured prediction and achieving sota performance on molecular identification.
An interesting avenue for future work is to derive excess risk for this estimator by combining deep learning theory and surrogate regression bounds.

\begin{ack}
Funded by the European Union.
Views and opinions expressed are however those of the author(s) only and do not necessarily reflect those of the European Union or European Commission.
Neither the European Union nor the granting authority can be held responsible for them.
This project has received funding from the European Union’s Horizon Europe research and innovation programme under grant agreement 101120237 (ELIAS), the Télécom Paris research chair on Data Science and Artificial Intelligence for Digitalized Industry and Services (DSAIDIS) and the PEPR-IA through the project FOUNDRY.
\end{ack}

%
%
%

\bibliographystyle{apalike}
\bibliography{references}

\clearpage

\appendix

\section{Technical proof}
\label{apx:proof}

We here provide the proof of \cref{prop:new-loss}.

\propnewloss*

\begin{proof}
    For any pair $(x, y) \in \bmX \times \bmY$, the loss function is given by
    \begin{align*}\label{eq:loss}
        \left\|h_\theta(x) - \psiy(y)\right\|_{\Hy}^2 &= \left\| \sum_{i=1}^p {g_{W}(x)}_j \tilde{e}_j - \psiy(y)\right\|_{\Hy}^2 \\
        &\hspace{-1cm}= \sum_{i, j=1}^p {g_{W}(x)}_i {g_{W}(x)}_j \langle \tilde{e}_i, \tilde{e}_j \rangle_{\Hy}  - 2 \sum_{j=1}^p {g_{W}(x)}_j \langle \tilde{e}_j, \psi(y)\rangle_{\Hy} + k(y,y) \\
        &\hspace{-1cm}= \left\|g_W(x)\right\|_2^2 - 2 {g_{W}(x)}^\top \sketchpsiy(y) + k(y,y) \,,
    \end{align*}
since $\tilde{E}$ is an orthonormal basis, and $\langle \tilde{e}_j, \psi(y)\rangle_{\Hy} = \tilde{e}_j(y) = \sketchpsiy(y)_j$ by the reproducing property.
Noting that
\begin{equation}
    \left\|g_W(x) - \sketchpsiy(y)\right\|_{2}^2 = \left\|g_W(x)\right\|_2^2 - 2 {g_{W}(x)}^\top \sketchpsiy(y) +  \left\|\sketchpsiy(y)\right\|_2^2\,,
\end{equation}
 and that both $k(y,y)$ and $\left\|\sketchpsiy(y)\right\|_2^2$ are independent of $W$ concludes the proof.
\end{proof}

\section{Graph Prediction via Output Kernel Regression}
\label{apx:graph_pred}

In this section, we present kernel examples to tackle graph prediction via Output Kernel Regression.

A graph $G$ is defined by its sets of vertices $V$ and edges $E$.
Besides, it may contain either node labels or attributes, or edge labels, attributes, or weights.
Before giving some examples of kernels dealing directly with graphs, we present examples of kernels dealing with fingerprints.

\paragraph{\bf Fingerprints.} Indeed, when manipulating molecules, either for molecular property prediction or molecule identification, many works use fingerprints to represent graphs \citep{RALAIVOLA20051093, Brouard_ismb2016, brouard2016input, tripp2023tanimoto}.
A fingerprint is a binary vector of length $d \geq 1$ and each entry of the fingerprint encodes the presence or absence of a substructure within the graph based on a dictionary.
Hence, when using fingerprints, the problem of graph prediction becomes a high-dimensional multi-label prediction problem.
A very popular kernel to handle fingerprints is the Tanimoto kernel \citep{tanimoto1958elementary}, which basically consists of an intercept over union measure between two fingerprints.

\paragraph{\bf Graph kernels.} In this work, we also manipulate raw graphs.
Many kernels exist to handle graphs, we present a few that we will use during the experiments.
For more details about these kernels and other graph kernel examples, see the documentation of the GraKel library \citep{grakel}.
\begin{definition}[Vertex Histogram kernel]
    Let $G = (V, E)$ and $G^\prime = (V^\prime, E^\prime)$ be two node-labeled graphs.
    Let $\bmL = \{1, \ldots, d\}$ be the set of labels, and $\ell : v \in V \mapsto \ell(v) \in \bmL$ be the function that assigns a label for each vertex.
    Then, the vertex label histogram of $G$ is a vector $f = (f_1, \ldots, f_d)^\top$, such that $f_i = | \{v \in V : \ell(v) = i \} |$ for each $i \in \bmL$.
    Let $f, f^\prime$ be the vertex label histograms of $G, G^\prime$, respectively.
    The vertex histogram kernel is then defined as the linear kernel between $f$ and $f^\prime$, that is
    \begin{equation}
        \kernely(G, G^\prime) = f^\top f^\prime\,.
    \end{equation}
\end{definition}
The VH kernel needs node-labeled graphs and simply compares two graphs based on the number of nodes having each type of label.
Its computation is very fast.

\begin{definition}[Shortest-Path kernel \citep{BorgwardtSP}]
    Let $G = (V, E)$ and $G^\prime = (V^\prime, E^\prime)$ be two graphs, and $S = (V, E_S)$ and $S^\prime = (V^\prime, E_{S^\prime}^\prime)$ their corresponding shortest-path graphs, i.e. the graphs where we only keep the edges contained in the shortest path between every vertex, then $E_S \subseteq E$ and $E_{S^\prime}^\prime \subseteq E^\prime$.
    The shortest-path kernel is then defined on $G$ and $G^\prime$ as
    \begin{equation}
        \kernely(G, G^\prime) = \kernely(S, S^\prime) = \sum_{e \in E} \sum_{e^\prime \in E^\prime} k_{\operatorname{walk}}^{(1)}(e, e^\prime)\,,
    \end{equation}
    where $k_{\operatorname{walk}}^{(1)}(e, e^\prime)$ is a positive semidefinite kernel on edge walks of length $1$.
\end{definition}
The SP kernel can handle graphs either without node labels, with node labels, or with node attributes.
This information, as well as the shortest path lengths, are encoded into $k_{\operatorname{walk}}^{(1)}$ whose classical choices are Dirac kernels or, more rarely, Brownian bridge kernels.
The computation of the SP kernel is very expensive since it takes $\bmO(n_V)$ time, where $n_V$ denotes the number of nodes.

We present the Neighborhood Subgraph Pairwise Distance kernel \citep{costa_fast_2010}.
This kernel extracts pairs of subgraphs from each graph and then compares these pairs.
\begin{definition}[Neighborhood Subgraph Pairwise Distance kernel \citep{costa_fast_2010}]
    Let $G = (V, E)$ and $G^\prime = (V^\prime, E^\prime)$ be two node-labeled and egde-labeled graphs.
    For $u, v  \in V$, $D(u, v)$ denotes the distance between $u$ and $v$, i.e. the length of the shortest path between them, for $r \geq 1$, $\{ u \in V : D(u, v) \leq r \}$ denotes the neighborhood of radius $r$ of a vertex $v$, i.e. the set of vertices at a distance less than or equal to $r$ from $v$, for a subset of vertices $S \subseteq V$, $E(S)$ denotes the set of edges that have both end-points in $S$, and we can define the subgraph with vertex set $S$ and edge set $E(S)$.
    $N_r^v$ denotes the subgraph induced by $\{ u \in V : D(u, v) \leq r \}$.
    Let also $R_{r, d}(A_v, B_u, G)$ be a relation between two rooted graphs $A_v$, $B_u$ and a graph $G = (V, E)$ that is true if and only if both $A_v$ and $B_u$ are in $\{N_r^v : v \in V\}$, where we require $A_v, B_u$ to be isomorphic to some $N_r^v$ to verify the set inclusion, and that $D(u, v) = d$.
    We denote with $R^{-1}(G)$ the inverse relation that yields all the pairs of rooted graphs $A_v$, $B_u$ satisfying the above constraints.
    The neighborhood subgraph pairwise distance kernel is then based on the following kernel
    \begin{equation}
        k_{r, d}(G, G^\prime) = \sum_{A_v, B_u \in R_{r, d}^{-1}(G)} \sum_{A_{v^\prime}^\prime, B_{u^\prime}^\prime \in R_{r, d}^{-1}(G^\prime)} \delta(A_v, A_{v^\prime}^\prime) \delta(B_u, B_{u^\prime}^\prime)\,,
    \end{equation}
    where $\delta$ is $1$ if its input subgraphs are isomorphic, and $0$ otherwise.
    This counts the number of identical pairs of neighboring subgraphs of radius $r$ at distance $d$ between two graphs.
    The NSPD kernel is then defined on $G$ and $G^\prime$ as
    \begin{equation}
        \kernely(G, G^\prime) = \sum_{r = 0}^{r^*} \sum_{d = 0}^{d^*} \hat{k}_{r, d}(G, G^\prime)\,,
    \end{equation}
    where $\hat{k}_{r, d}$ is a normalized version of $k_{r, d}$, and $r^*$ and $d^*$ are hyper-parameters of the kernel.
\end{definition}
The NSPD takes into account the edge labels, which can be of particular interest when manipulating molecules.
For small values of $r^*$ and $d^*$, its complexity is in practice linear in the size of the graph.

We now introduce the Weisfeiler-Lehman framework, inspired by the Weisfeiler-Lehman test of graph isomorphism \citep{weisfeiler1968reduction}, that operates on top of existing graph kernels.
The Weisfeiler-Lehman algorithm replaces the label of each vertex with a multiset label consisting of the original label of the vertex and the sorted set of labels of its neighbors.
The resulting multiset is then compressed into a new, short label, and this procedure is repeated for $h$ iterations.
\begin{definition}[Weisfeiler-Lehman kernel \citep{JMLR:v12:shervashidze11a}]
    Let $G = (V, E)$ and $G^\prime = (V^\prime, E^\prime)$ be two node-labeled graphs, endowed with labeling functions $\ell = \ell_0$ and $\ell^\prime = \ell_0^\prime$, respectively.
    The Weisfeiler-Lehman graph of $G$ at height $i$ is a graph $G_i$ endowed with a labeling function $\ell_i$ which has emerged after $i$ iterations of the relabeling procedure described previously.
    Let $\kernely_{\operatorname{base}}$ be any kernel for graphs, called the base kernel.
    The Weisfeiler-Lehman kernel with $h$ iterations is then defined on $G$ and $G^\prime$ as
    \begin{equation}
        \kernely(G, G^\prime) = \kernely_{\operatorname{base}}(G_0, G_0^\prime) + \ldots + \kernely_{\operatorname{base}}(G_h, G_h^\prime)\,.
    \end{equation}
\end{definition}
A very popular choice is the Weisfeiler-Lehman subtree kernel, which corresponds to choosing
the VH kernel as the base kernel.
Its time complexity is $\bmO(h n_E)$, where $n_E$ denotes the number of edges, which is efficient.
We call it the WL-VH kernel.

We finally present the Core kernel framework that, similarly to the WL framework, operates on top of existing graph kernels.
It builds upon the notion of $k$-core decomposition, first introduced to study the cohesion of social networks \citep{SEIDMAN1983269}.
\begin{definition}[Core kernel \citep{nikolentzos_degeneracy_2018}]
    Let $G = (V, E)$ and $G^\prime = (V^\prime, E^\prime)$ be two graphs.
    Let $G_{\operatorname{sub}}(S, E(S))$ be the subgraph induced by the subset of vertices $S \subseteq V$ and the set of edges $E(S)$ that have both end-points in $S$.
    Let $d_{G_{\operatorname{sub}}}(v)$ be the degree of a vertex $v \in S$, i.e. the number of vertices that are adjacent to $v$ in $G_{\operatorname{sub}}$.
    The, $G_{\operatorname{sub}}$ is a $k$-core of $G$, denoted by $C_k$, if it is a maximal subgraph of $G$ in which all vertices have a degree at least $k$.
    Let $\kernely_{\operatorname{base}}$ be any kernel for graphs, called the base kernel.
    The core variant of this kernel is then defined on $G$ and $G^\prime$ as
    \begin{equation}
        \kernely(G, G^\prime) = \kernely_{\operatorname{base}}(C_0, C_0^\prime) + \ldots + \kernely_{\operatorname{base}}(C_{\delta_{\operatorname{min}}^*}, C_{\delta_{\operatorname{min}}^*}^\prime)\,,
    \end{equation}
    where $\delta_{\operatorname{min}}^*$ is the minimum of the degeneracies of the two graphs, and for all $1 \leq i \leq \delta_{\operatorname{min}}^*$, $C_i$ and $C_i^\prime$ are the $i$-core subgraphs of $G$ and $G^\prime$.
\end{definition}
The time complexity of computing the $k$-core decomposition of a graph is $\bmO(n_V + n_E)$.
Moreover, the complexity of computing the core variant of a kernel depends on its complexity, and in general, the complexity added by the core variant is not very high.

\section{Additional Experimental Details}
\label{apx:add_xps}

In this section, we report additional experimental details on both SMI2Mol and ChEBI-20 datasets.

\subsection{Additional Experimental Details for SMI2Mol}
\label{apx:add_xps_s2m}
For DSOKR, we optimize the parameters of neural networks using Adam with a learning rate of $10^{-3}$ over 50 epochs. We adopt early stopping based on the validation set's edit distance. The number of transformer layers is chosen from $\{3, 6\}$, the model dimension is selected from $\{256, 512\}$, the number of heads is set to $8$, the feed-forward network dimension is set to four times the model dimension, and the dropout probability is set to $0.2$.

More examples of predictions can be found in \cref{fig:smi2mol_examples_add}.

\begin{figure}[!t]
        \centering
        \begin{subfigure}[b]{0.19\textwidth}
            \centering
            \includegraphics[width=\textwidth]{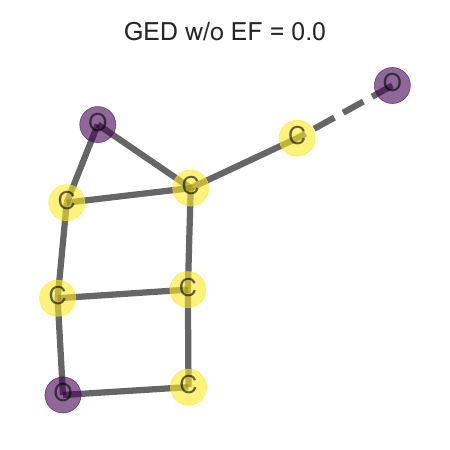}
        \end{subfigure}
        \hfill
        \begin{subfigure}[b]{0.19\textwidth}  
            \centering 
            \includegraphics[width=\textwidth]{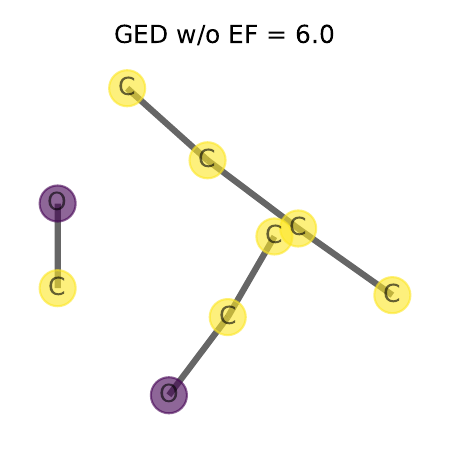} 
        \end{subfigure}
        \hfill
        \begin{subfigure}[b]{0.19\textwidth}  
            \centering 
            \includegraphics[width=\textwidth]{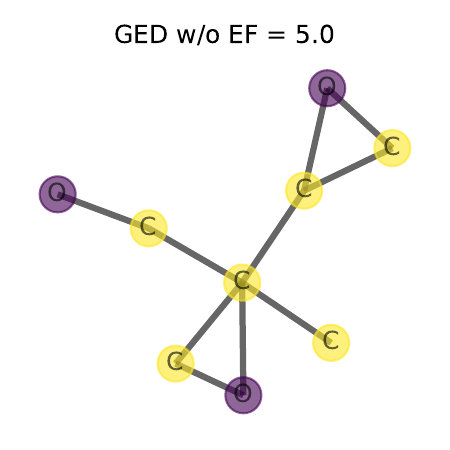} 
        \end{subfigure}
        \hfill
        \begin{subfigure}[b]{0.19\textwidth}  
            \centering 
            \includegraphics[width=\textwidth]{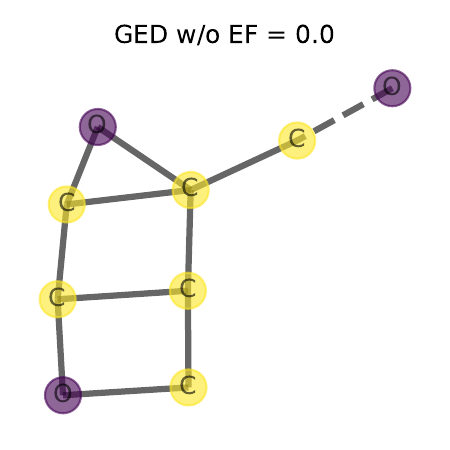} 
        \end{subfigure}
        \hfill
        \begin{subfigure}[b]{0.19\textwidth}  
            \centering 
            \includegraphics[width=\textwidth]{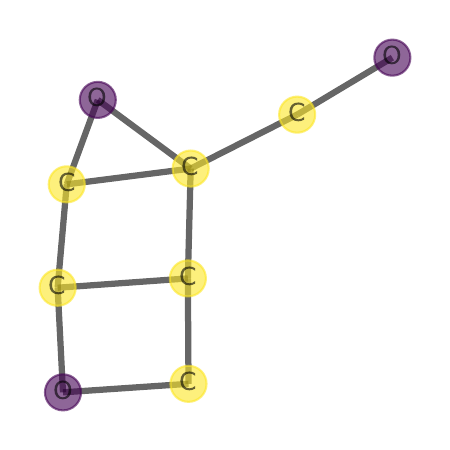} 
        \end{subfigure}
        \vskip\baselineskip
        \begin{subfigure}[b]{0.19\textwidth}
            \centering
            \includegraphics[width=\textwidth]{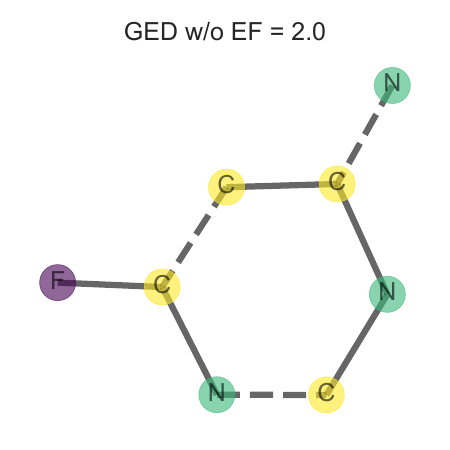}
        \end{subfigure}
        \hfill
        \begin{subfigure}[b]{0.19\textwidth}  
            \centering 
            \includegraphics[width=\textwidth]{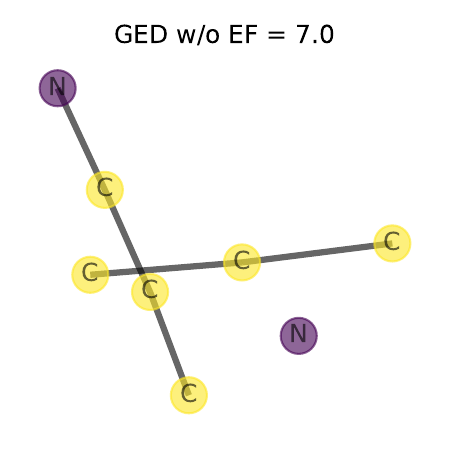} 
        \end{subfigure}
        \hfill
        \begin{subfigure}[b]{0.19\textwidth}  
            \centering 
            \includegraphics[width=\textwidth]{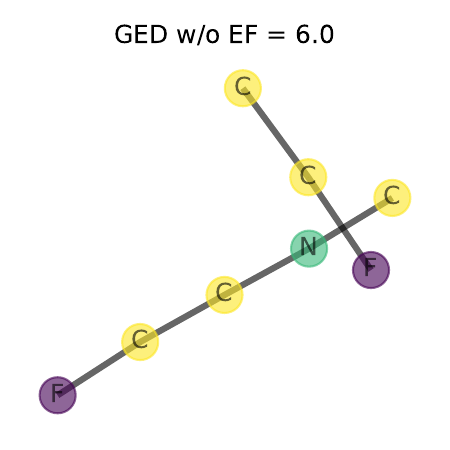} 
        \end{subfigure}
        \hfill
        \begin{subfigure}[b]{0.19\textwidth}  
            \centering 
            \includegraphics[width=\textwidth]{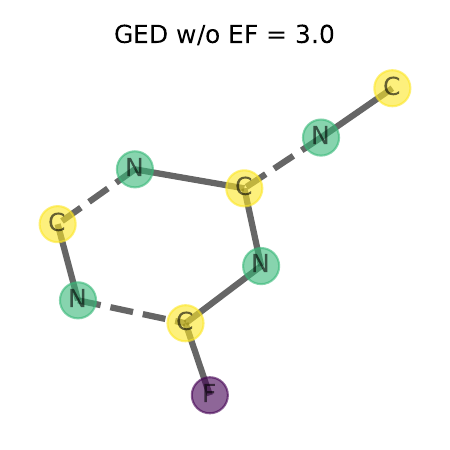} 
        \end{subfigure}
        \hfill
        \begin{subfigure}[b]{0.19\textwidth}  
            \centering 
            \includegraphics[width=\textwidth]{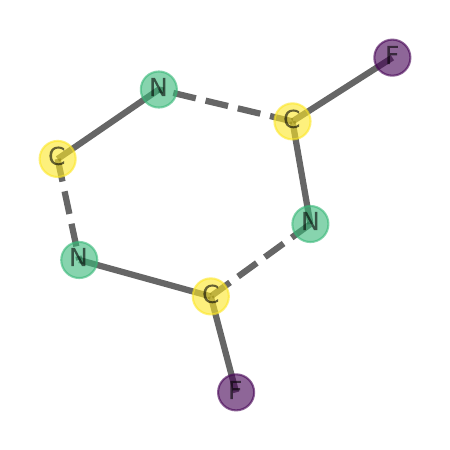} 
        \end{subfigure}
        \vskip\baselineskip
        \begin{subfigure}[b]{0.19\textwidth}
            \centering
            \includegraphics[width=\textwidth]{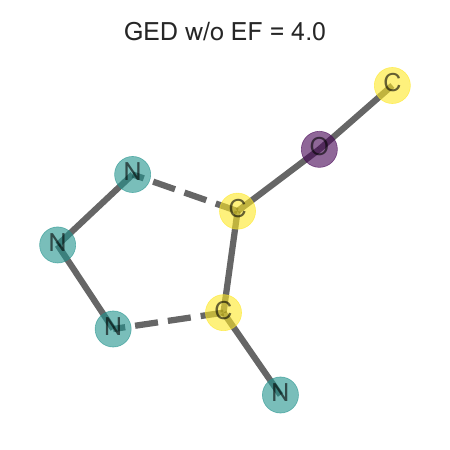}
        \end{subfigure}
        \hfill
        \begin{subfigure}[b]{0.19\textwidth}  
            \centering 
            \includegraphics[width=\textwidth]{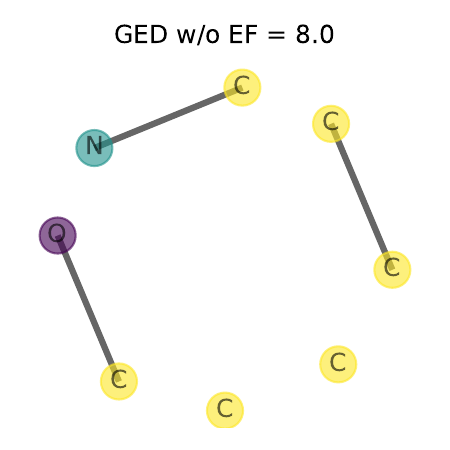} 
        \end{subfigure}
        \hfill
        \begin{subfigure}[b]{0.19\textwidth}  
            \centering 
            \includegraphics[width=\textwidth]{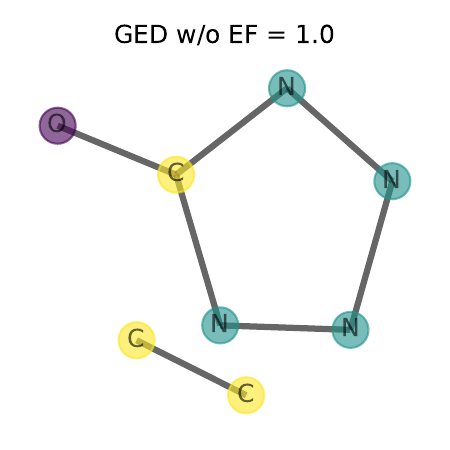} 
        \end{subfigure}
        \hfill
        \begin{subfigure}[b]{0.19\textwidth}  
            \centering 
            \includegraphics[width=\textwidth]{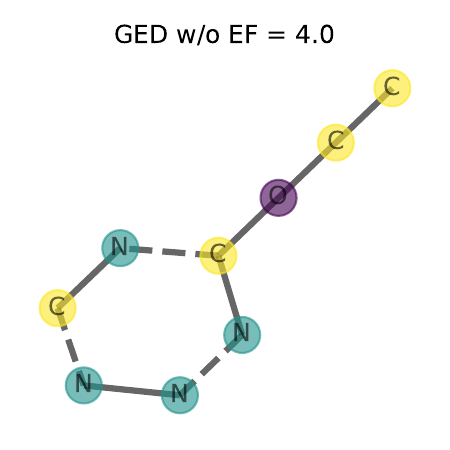} 
        \end{subfigure}
        \hfill
        \begin{subfigure}[b]{0.19\textwidth}  
            \centering 
            \includegraphics[width=\textwidth]{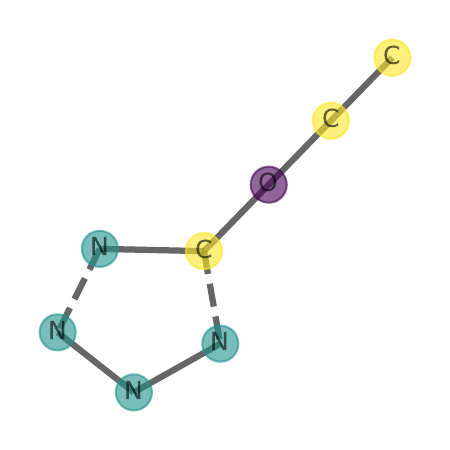} 
        \end{subfigure}
        \vskip\baselineskip
        \begin{subfigure}[b]{0.19\textwidth}
            \centering
            \includegraphics[width=\textwidth]{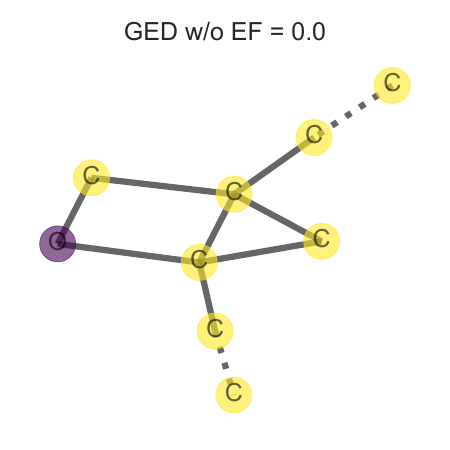}
        \end{subfigure}
        \hfill
        \begin{subfigure}[b]{0.19\textwidth}  
            \centering 
            \includegraphics[width=\textwidth]{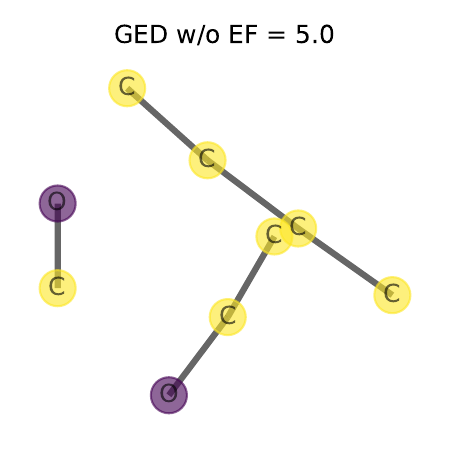} 
        \end{subfigure}
        \hfill
        \begin{subfigure}[b]{0.19\textwidth}  
            \centering 
            \includegraphics[width=\textwidth]{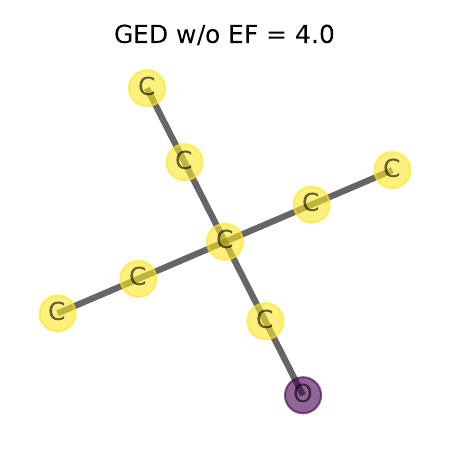} 
        \end{subfigure}
        \hfill
        \begin{subfigure}[b]{0.19\textwidth}  
            \centering 
            \includegraphics[width=\textwidth]{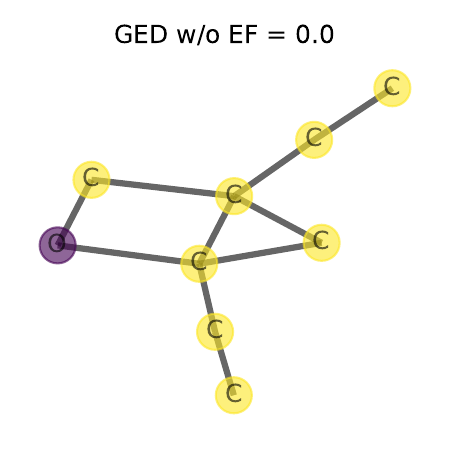} 
        \end{subfigure}
        \hfill
        \begin{subfigure}[b]{0.19\textwidth}  
            \centering 
            \includegraphics[width=\textwidth]{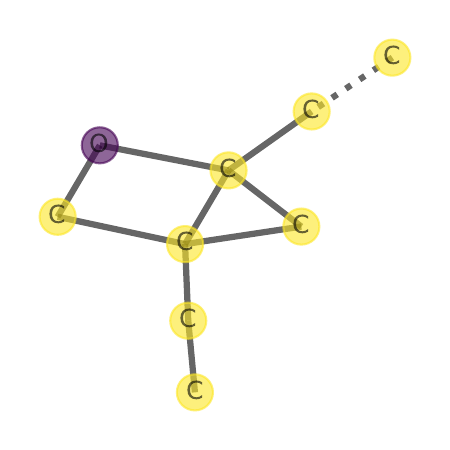} 
        \end{subfigure}
        \vskip\baselineskip
        \begin{subfigure}[b]{0.19\textwidth}
            \centering
            \includegraphics[width=\textwidth]{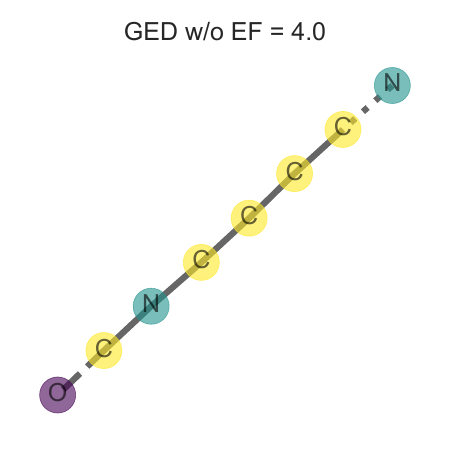}
        \end{subfigure}
        \hfill
        \begin{subfigure}[b]{0.19\textwidth}  
            \centering 
            \includegraphics[width=\textwidth]{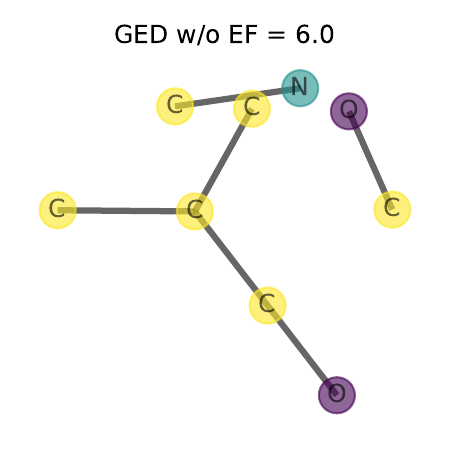} 
        \end{subfigure}
        \hfill
        \begin{subfigure}[b]{0.19\textwidth}  
            \centering 
            \includegraphics[width=\textwidth]{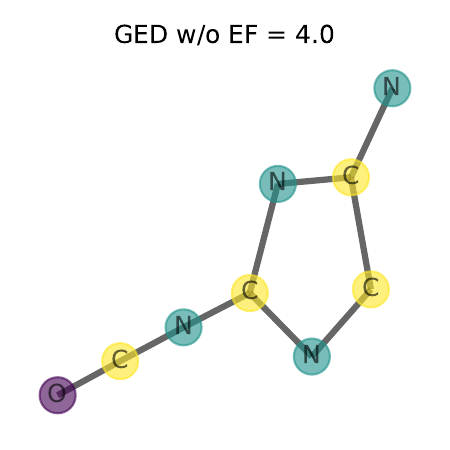} 
        \end{subfigure}
        \hfill
        \begin{subfigure}[b]{0.19\textwidth}  
            \centering 
            \includegraphics[width=\textwidth]{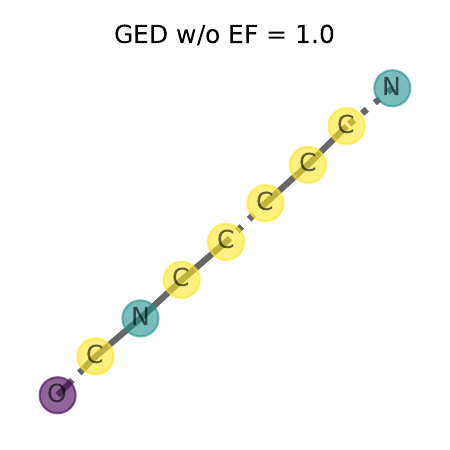} 
        \end{subfigure}
        \hfill
        \begin{subfigure}[b]{0.19\textwidth}  
            \centering 
            \includegraphics[width=\textwidth]{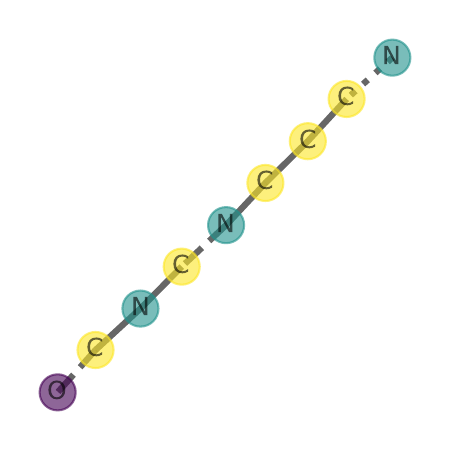} 
        \end{subfigure}
        \vskip\baselineskip
        \begin{subfigure}[b]{0.19\textwidth}
            \centering
            \includegraphics[width=\textwidth]{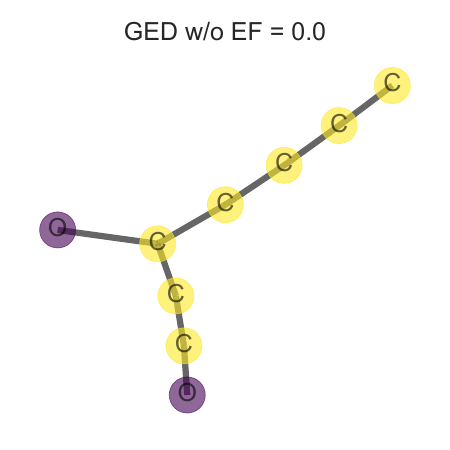}
            \caption{SISOKR}
        \end{subfigure}
        \hfill
        \begin{subfigure}[b]{0.19\textwidth}  
            \centering 
            \includegraphics[width=\textwidth]{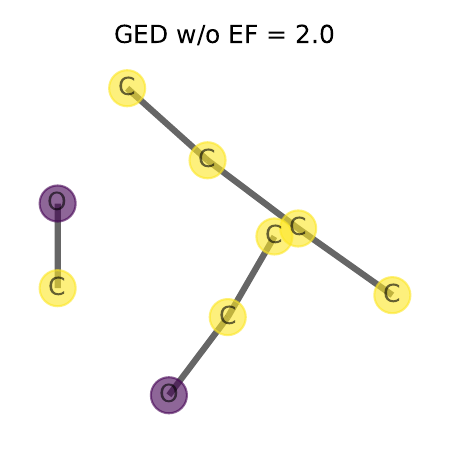} 
            \caption{NNBary}
        \end{subfigure}
        \hfill
        \begin{subfigure}[b]{0.19\textwidth}  
            \centering 
            \includegraphics[width=\textwidth]{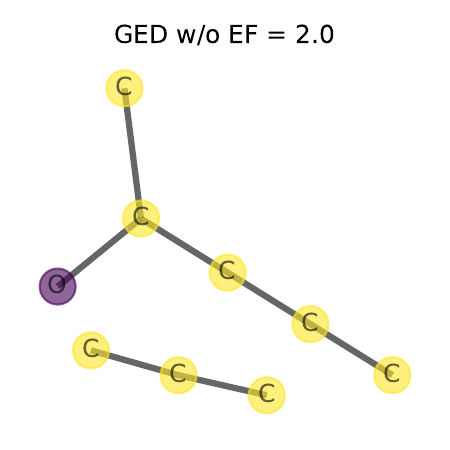} 
            \caption{ILE}
        \end{subfigure}
        \hfill
        \begin{subfigure}[b]{0.19\textwidth}  
            \centering 
            \includegraphics[width=\textwidth]{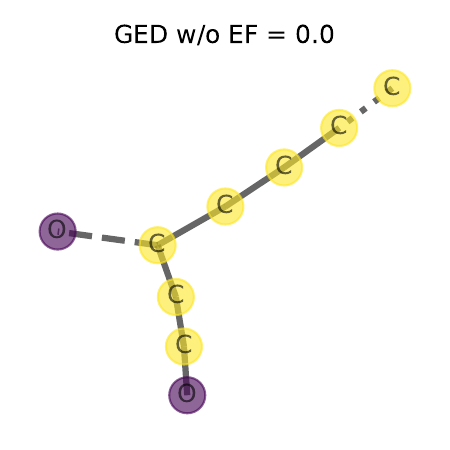} 
            \caption{DSOKR}
        \end{subfigure}
        \hfill
        \begin{subfigure}[b]{0.19\textwidth}  
            \centering 
            \includegraphics[width=\textwidth]{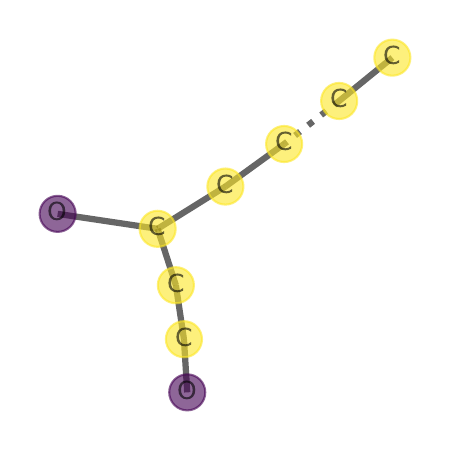} 
            \caption{True target}
        \end{subfigure}
        \caption{\small More predicted molecules on the SMI2Mol dataset.} 
        \label{fig:smi2mol_examples_add}
\end{figure}

\subsection{Additional Experimental Details for ChEBI-20}
\label{apx:add_xps_t2m}
For DSOKR, we conducted training on SciBERT for 50 epochs using the Adam optimizer with a learning rate of $3 \times 10^{-5}$. Additionally, we implemented a learning rate schedule that linearly decreases from the initial rate set by the optimizer to 0, following a warm-up period of 1000 steps where it linearly increases from 0 to the initial rate. We incorporated early stopping based on the MRR score on the validation set as well.

\cref{fig:chebi_hperfect_gamma} presents the validation MRR with respect to $\my$ obtained by \textit{Perfect h} with a Gaussian output kernel and additional values of $\gamma$.
The best $\gamma$ is clearly $10^{-6}$ since all sketching types attain the performance of the non-sketched \textit{Perfect h}.
\cref{tab:expr_t2m_complete} presents all the results gathered on ChEBI-20 with the additional Mean Rank metric.
DSOKR under-performs in terms of mean rank compared with CMAM while outperforming it in terms of hits@1, attaining around 50\%, and being equivalent to the ensemble CMAM methods in terms of hits@10, attaining around 88\%, which means that most of the time, the correct molecule is predicted in the top rankings and even at the top position half of the time, but in the 12\% left, the correct molecule falls to a high predicted rank.

\begin{figure}[!t]
\centering
\includegraphics[width=0.48\textwidth]{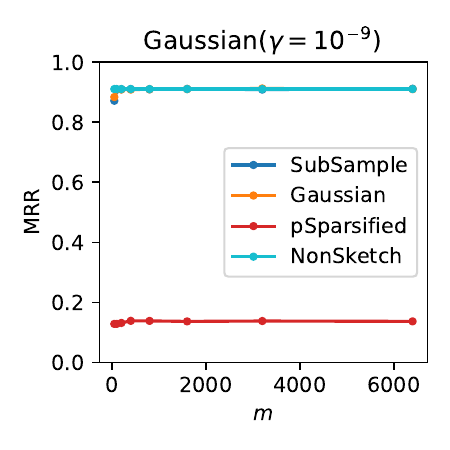}
\includegraphics[width=0.48\textwidth]{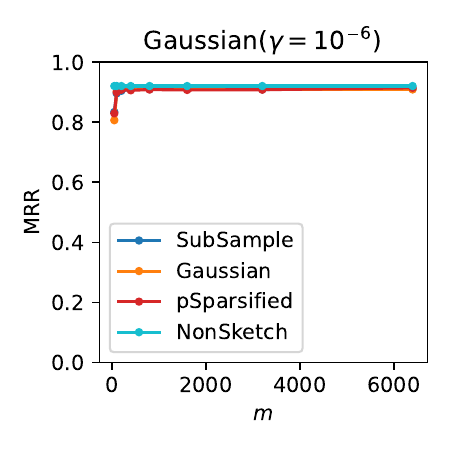}
\includegraphics[width=0.48\textwidth]{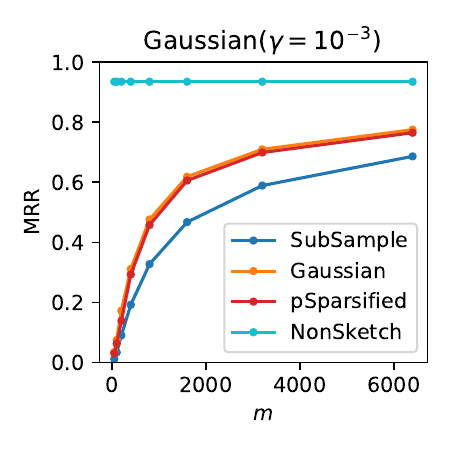}
\includegraphics[width=0.48\textwidth]{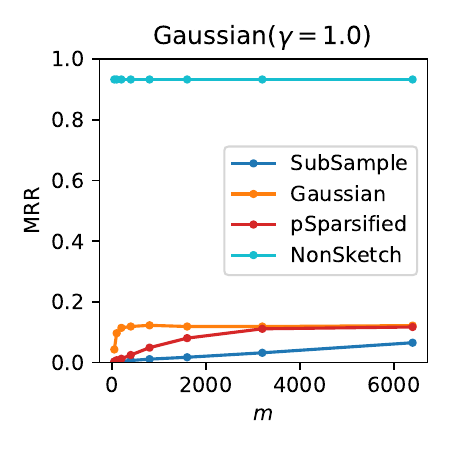}
\caption{The MRR scores on ChEBI-20 validation set with respect to the sketching size $m$ for \textit{Perfect h} when the output kernel is Gaussian with $\gamma \in \{10^{-9}, 10^{-6}, 10^{-3}, 1.0\}$.}
\label{fig:chebi_hperfect_gamma}
\end{figure}

\begin{table*}[t!]
\caption{Performance of different methods on ChEBI-20 test set. All the methods based on NNs use SciBERT as input text encoder for fair comparison.}
\begin{center}
\begin{tabular}{lccccc }
\toprule
 &  Mean Rank $\downarrow$ & MRR $\uparrow$ & Hits@1 $\uparrow$ & Hits@10 $\uparrow$\\
\midrule
SISOKR & 2230.48 & 0.015 & 0.4\% & 2.8\%\\
SciBERT Regression & 344.53 & 0.298 & 16.8\% & 56.9\%\\
\hdashline
CMAM - MLP  & 23.74 &  0.513 & 34.9\% & 84.2\%\\
CMAM - GCN   &  24.11 & 0.495 & 33.2\% & 82.5\%\\
CMAM - Ensemble (MLP)  &  17.92 & 0.562 &39.8\%  & 87.6\% \\
CMAM - Ensemble (GCN)  & 20.48 &  0.551 & 39.0\% &  87.0\%\\
CMAM - Ensemble (MLP + GCN)  & \textbf{16.28} & 0.597 & 44.2\% & \textbf{88.7}\%  \\
\midrule
DSOKR - SubSample Sketch &  82.92 & 0.624  & 48.2\% & 87.4\% \\
DSOKR - Gaussian Sketch &  91.19 & 0.630 & 49.0\% & 87.5\% \\
DSOKR - Ensemble (SubSample Sketch) & 76.43  & \textbf{0.642} & \textbf{51.0}\% & 88.2\% \\
DSOKR - Ensemble (Gaussian Sketch) &  81.70 & \textbf{0.642} & 50.5\% & 87.9\% \\
DSOKR - Ensemble (SubSample + Gaussian) & 76.87 & 0.640 & 50.0\% & 88.3\%\\
\bottomrule
\end{tabular}
\end{center}
\label{tab:expr_t2m_complete}
\end{table*}

\end{document}